\documentclass[twocolumn]{paper}    % Enable this line and disable the 
\usepackage{times}
\usepackage{booktabs}
\usepackage{amsmath,rotating}
\usepackage{amsfonts}
\usepackage{amssymb}
\usepackage{times,multirow,hhline,subfigure}
\usepackage{comment}
\usepackage{algorithmic}
\usepackage{algorithm}
\usepackage{graphicx}
\usepackage{subfigure}
\usepackage[scanall]{psfrag}
\usepackage{theorem}
\newtheorem{Theorem}{Theorem}

\newtheorem{Lemma}{Lemma}

% new commands

\newcommand{\eq}{\begin{equation}}
\newcommand{\eqn}{\end{equation}}
\newcommand{\abstr}{\begin{quotation}\sl}
\newcommand{\abstrn}{\end{quotation}}

\def\sqw{\hfill\hbox{\lower.1ex\hbox{$\sqcup$}
    \kern-1.02em\lower.1ex\hbox{$\sqcap$}}\ }

\newtheorem{Definition}{Definition}
\newcommand{\bmat}[1]{ \begin{bmatrix} #1 \end{bmatrix} }
\newenvironment{proof}{\noindent{\bf Proof:}}{}
\DeclareMathOperator*{\argmin}{\arg\min}

\DeclareMathOperator*{\diag}{diag}

\DeclareMathOperator*{\sign}{sign}

% new symbols

\newcommand{\R}{\mathbb{R}}

\newcommand{\U}{\mathbf{u}}
\newcommand{\Obs}{\mathcal{S}}

\begin{document}
\title{MINLIP for the Identification of Monotone Wiener Systems}
\author{Kristiaan Pelckmans |kp@it.uu.se|,\\ 
		{\small Division of Systems and Control, Department of Information Technology}\\{\small
		Uppsala University, Box 337, SE-751 05, Uppsala, Sweden}}
%%
%\begin{frontmatter}
%%\runtitle{Insert a suggested running title}  % Running title for regular 
%                                              % papers but only if the title  
%                                              % is over 5 words. Running title 
%                                              % is not shown in output.
%
%\title{MINLIP for the Identification of Monotone Wiener Systems\thanksref{footnoteinfo}} % Title, preferably not more 
%                                                % than 10 words.
%
%\thanks[footnoteinfo]{The author acknowledges financial support of European Research Council
%under Seventh Framework Program and within Advanced Grant no. 247035 "Systems and Signals Tools for Estimation and Analysis of Mathematical Models in Endocrinology and Neurology".}
%
%\author[Pelckmans]{Kristiaan Pelckmans}\ead{kp@it.uu.se}    % Add the 
%\address[Pelckmans]{Division of Systems and Control, Department of Information Technology\\
%		Uppsala University, Box 337, SE-751 05, Uppsala, Sweden}  % Please supply                                              
%
%          
%\begin{keyword}                           % Five to ten keywords,  
%	System Identification, Wiener Systems, Nonlinear Systems, Quantized signals, Convex Optimization	
%\end{keyword}                             % keyword list or with the 
%
%
%
%
%
\maketitle
\thispagestyle{empty}
\pagestyle{empty}

%%%%%%%%%%%%%%%%%%%%%%%%%%%%%%%%%%%%%%%%%%%%%%%%%%%%%%%%%%%%%%%%%%%%%%%%%%%%%%%%
\begin{abstract}
	This paper studies the MINLIP estimator for the identification of Wiener systems consisting of 
	a sequence of a linear FIR dynamical model, and a monotonically increasing (or decreasing) static function. 
	Given $T$ observations, this algorithm boils down to solving a convex 
	quadratic program with $O(T)$ variables and inequality constraints, 
	implementing an inference technique which is based entirely on model complexity 
	control\footnote{This is a rewritten, refined and extended version of the CDC 2010 
	submission 'On the Identification of Monotone Wiener Systems'.}.
	The resulting estimates of the linear submodel are found to be almost consistent when no noise is present in the data,
	under a condition of smoothness of the true nonlinearity and local Persistency of Excitation (local PE) of the data.
	This result is novel as it does not rely on classical tools as a 'linearization' using a Taylor decomposition, 
	nor exploits stochastic properties of the data.
	It is indicated how to extend the method to cope with noisy data, 
	and empirical evidence contrasts performance of the 
	estimator against other recently proposed techniques. 
\end{abstract}

%\end{frontmatter}

\section{INTRODUCTION}
%%%%%%%%%%%%%

The identification of Wiener systems has been considered in many papers since the 1970s.
Different existing approaches could roughly be divided in methods using 
(i) Invertible nonlinearities (reducing to Hammerstein identification), 
(ii) correlation based approaches exploiting stochastic properties of the signals \cite{billings1978,bai2002},
(iii) approximate (recursive) PEM approaches providing a well-established framework for convergence analysis 
\cite{wigren1994,wigren1995} and \cite{hagenblad99a},
(iv) subspace based approaches \cite{westwick1996}.
For a general overview see the survey \cite{greblicki2008}.
Specific applications towards identification with quantized outputs
are considered in \cite{wigren1998}, see also the book \cite{tsumura2002}.
The present approach builds further on ideas developed in \cite{zhang2006,bai2008}.
Another relevant work is \cite{voros2001} which considers a similar identification task as we will do.
The MINLIP estimator (or shortly MINLIP, we will clarify the abbreviation shortly) 
studied in this paper for identification of dynamic Wiener systems 
was originally conceived in the context of learning ranking  functions and survival analysis, 
see \cite{vanbelle09c} and earlier work of those authors. 
In \cite{pelckmans05_hamcdc}, the authors studied the impact of model 
complexity control for the identification of Hammerstein systems.
In \cite{pelckmans10a} the use of explicit complexity control 
was investigated in the context of adaptive filtering.

While the literature on the identification of Wiener systems is considerable, 
often theoretical understanding of the proposed techniques is restricted to exposition of an appropriate technical implementation.
Notable exceptions are given in \cite{wigren1995}, \cite{greblicki2008} and \cite{bai2008}.
The first work considers a Recursive Prediction Error Method (RPEM) of general Wiener models, 
and convergence properties are derived using the ODE framework as in  \cite{ljung1977}.
This approach makes considerable assumptions on the stochastic mechanisms underlying the 
signals used for (recursive) identification.
The smoothing approach described in \cite{greblicki2008} exploits as well 
a stochastic assumption of the involved signals, and asserts basically that the Wiener 
system can be identified by directly averaging out the nonlinear effect. 
Although powerful concentration inequalities lie on the basis of this approach,
no argument is given that this method applies for Wiener systems which are more complex
(realistic) than the academic examples presented in those papers.

This work was prompted by the earlier \cite{zhang2006}, exploring the task of identification of 
monotone wiener systems. 
The practical algorithm proposed in that paper will always yield a trivial solutions ($h=0_d$ in their notation),
and is as such to be depreciated. The line of thinking however looks powerful, and this led
\cite{bai2008} to investigate the question under what conditions on 
the static nonlinearity (besides monotonicity) a Wiener system is identifiable. 
The present work takes this results a step further, introducing model complexity control into the picture.
Specifically, we express model complexity in terms of a Lipschitz property of the estimated nonlinearity,
The idea of minimizing (MIN-) this Lipschitz property (-LIP)  results directly in an efficient identification technique
termed the MINLIP estimator  - or shortly MINLIP -
which can be solved efficiently using tools of convex optimization.
Specifically, we rephrase the identification task as a convex Quadratic Programming (QP) problem of 
$O(T)$ unknowns and inequalities (where $T$ denotes the number of samples).
We as well present an analysis that the estimates given by this algorithm are almost consistent,
where the approximation factor relies on the {\em richness} of the data in terms of a local measure of persistency of excitation, 
and the smoothness of the static nonlinearity.
This analysis does not resort to local approximations using a Taylor decomposition, and does not need 
any stochastic setup.

The contribution of this paper is threefold.
Section 2 describes the precise class of monotone Wiener systems which is 
envisaged, and discusses the main ideas motivating the MINLIP estimator.
An artificial yet challenging case study provides empirical evidence for this estimator for noiseless data. 
Section 3 then establishes almost consistency of the estimates under appropriate 
conditions of the data used for identification, and the underlying model.
This result is non-trivial as the considered model does not allow for 
a straightforward (finite) parametrization, and is not based on minimizing a model mismatch criterion as classical.
In particular, we need to have that the data is locally Persistent Exciting (PE) while 
the true monotone nonlinearity needs to be smooth around its steepest part.

Section 3 describes an extension towards the case where
(i) data is perturbed by noise, 
(ii) the true system does not belong to the considered model class of monotone Wiener systems of given order, or 
(iii) where  no true model is assumed to exist.
Again, the MINLIP estimates are given by solving a convex Quadratic Program with $O(T)$ unknowns and inequality constraints.
Empirical evidence is given for the use of this estimator, and 
the degradation of the accuracy of the estimate in case of noise is investigated experimentally.
Section 5 concludes the paper, and highlights a number of open research questions.

\section{Identification of Monotone Wiener Systems}
%%%%%%%%%%%%%%%%%%%%%%%%%%

\subsection{The Model Class}
%%%%%%%%%%%%%%%

\begin{figure}
	\begin{center}\includegraphics[height=1.2cm]{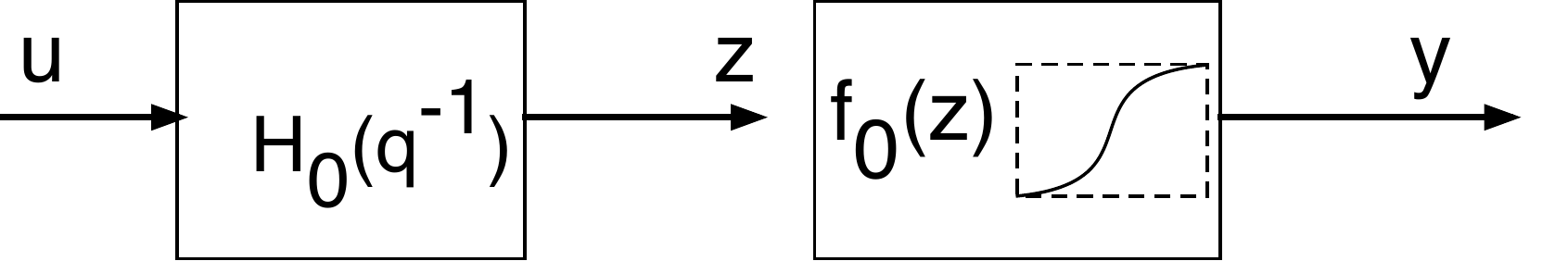}\end{center}
    %\scalebox{.4}{\psfig{figure=wiener2.pdf}}    
  \caption{\sl Schematic  representation of Wiener systems under consideration. 
  	The function $f_0:\R\rightarrow\R$ is assumed to be monotonically in- or decreasing. 
	Neither $H_0$ nor $f_0$ is assumed to be invertible.
  }
  \label{fig.wiener}
\end{figure}

This work focus on the identification of nonlinear dynamic models 
in the following model class.

\begin{Definition}[FIR Wiener Model $(f,a)$]
  A Wiener model consists of a sequence of 
  (i) a linear dynamical model characterized by an impulse response function $H(q^{-1})$ 
  (here $q^{-1}$ is the backshift operator as classically) 
  applied on the {\em input} signal $\{u_t\}_t$,
  and (ii) a static nonlinear function $f:\R\rightarrow\R$ (see Fig. 1).
  If the signals $\{u_t\}_t$ and $\{y_t\}_t$ follow such a model with 
  'true' subsystem $H_0$ and 'true' function $f_0$, we can write 
  \begin{equation}
  	y_t = f_0\Big( H_0(q^{-1})(u_t) \Big),
  	\label{eq.wiener}
  \end{equation}
  and we say that the observations come from the Wiener {\em system} $(H_0,f_0)$.
  For a FIR-Wiener model of order $d$, 
  one considers a Finite Impulse Response (FIR) parametrization of the linear subsystem, or 
  $H(q^{-1}) = a_1 q^{-1} + \dots + a_dq^{-d}$.
  We denote such model (in the context of this paper) shortly as the $(f,a)$-Wiener model.
  Now since $a$ and the domain of $f$ can be rescaled arbitrarily, it is convenient to impose $\|a\|_2=1$,
  which does avoid identifiabilty issues.
  If the signals $\{u_t\}_t$ and $\{y_t\}_t$ obey such a model with 'true' function $f_0$ 
  and 'true' parameters $a_0$, or 
  \begin{equation}
  		y_t 	= f_0\left( \sum_{k=1}^{d} a_{0,k} u_{t-k}\right)
    			= f_0( a_0^T \U_t ).
    \label{eq.wiener.fir}
  \end{equation}
  - where $a_0\in\R^d$ and $\|a_0\|_2=1$  and 
  we define $\U_t=(u_{t-1},\dots, u_{t-d})^T\in\R^d$ -
  we say that the observations come from the Wiener {\em system} $(a_0,f_0)$.
  We will denote the set of possible observations as 
  $\Obs = \{(\U_t,y_t)\}_t\subseteq\R^d\times\R$.
\end{Definition}
We specialize further to a subset of this class as follows,
schematically illustrated in fig. (\ref{fig.wiener}).
\begin{Definition}[Monotone FIR Wiener Model $(f,a)$]
	A FIR-Wiener model $(f,a)$ is called {\em monotone} if 
	$f:\R\rightarrow\R$ is monotonically increasing (but not necessarily invertible),
	 \begin{equation}
    		y_t 	= f( a^T \U_t ).
		\label{eq.monowien}
	\end{equation}
	We define the Monotone Wiener model class formally as 
	\begin{multline}
		\mathcal{F} 
		= \left\{ (f,a) \ \Big| \ f:\R\rightarrow\R: \mbox{Monotonically increasing}, \right.\\ \left. 
		a\in\R^d, \|a\|_2=1\right\}.
		\label{eq.class}
	\end{multline}
\end{Definition}
Note that by similarity one has $(f,a)= (f',-a)$, where $f(z) = f'(-z)$ for all $z\in\R$.
Now $f'$ is monotonically decreasing, explaining why we can omit the denominator 'increasing' in the nomenclature.
As argued before, this class of monotone FIR-Wiener model can capture such different effects as 
(1) quantized output measurements, 
(2) saturation effects of the sensor, and 
(3) handling of general bijective transformations of the output scaling (cfr. the temperature scale of Celsius versus Fahrenheit), amongst others.

\subsection{Identification by MINLIP}
%%%%%%%%%%%%%%%%%%

The identification technique implements the adagio 'make problems as simple as possible, but not simpler'. 
The surprising result is that this idea may yield consistent estimates, without any reference 
to notions as 'statistical likelihood' or 'prediction error'.

The problem of identification of a Wiener system from observations is traditionally formalized as follows
\begin{equation}
  \min_{a,f: \: \|a\|_2=1} J(f,a) = \sum_{t=d+1}^{T} \left( f(a^T\U_t) -  y_t \right)^2.
  \label{eq.whcost}
\end{equation}
We refer to this formulation of the estimation problem as to a prediction error method for Wiener 
models - abbreviated here as WPEM - and it will be mainly this approach we will contrast 
the proposed method against.
Note that this approach in case of noise as in Section III is a merely an Output-Error (OE) technique,
unless stringent stochastic assumptions can be made on the noise, see e.g. \cite{hagenblad99a} for a discussion.
In general,this formulation is hard to solve as the unknowns $a$ interact directly with the unknown function $f$. As a result one typically resorts to an iterative scheme or general purpose nonlinear optimization routine. The practical procedures lack generality and robustness for different reasons
(i) depending on the form (or parametrization) of $f$, ill-conditioning of the problem may arise or even gradient information may not exist,
(ii) the problem can often be stuck in local minima, which can be arbitrary bad
(iii) procedures are highly depending on the exact representation of the unknown $f$.
In general, such procedures are therefore not easily scalable to more complex settings.

The approach we will advocate in this paper is however conceptually quite different.
{\em Rather than minimizing the equation errors, we look for the least complex model reconstructing the observations}. 
What we mean by 'least complex model' is somehow up to the user to decide. 
In this paper we consider a specific complexity measure defined as follows which will eventually reduce the 
inference problem to an optimization problem which can be solved efficiently, and for which we prove consistency.
\begin{Definition}[Lipshitz Condition]
	Consider a function$f:\R\rightarrow\R$.
	Assume there exists a constant $L$ such that one has for all $z,z'\in\R^d$ that 
	\begin{equation}
		\left|f(z) - f(z')\right| \leq L \left| z-z'\right|,
		\label{eq.lipschitz}
	\end{equation}
	then $f$ is Lipschitz smooth with a constant $L$.
	\label{def.lip}
\end{Definition}
Now, we structure the model class by the following nested sets
\begin{equation}
	\mathcal{F}_L = \left\{ (f,a)\in\mathcal{F} \ \Big| \ f: \mbox{Lipschitz with constant } L, \|a\|_2=1\right\}.
	\label{eq.classL}
\end{equation}

Now if $L_1<L_2<\dots < L_k$ are $k$ sorted constants, then one has a nested structure 
over the model class, i.e.
\begin{equation}
	\mathcal{F}_{L_1} \subseteq \mathcal{F}_{L_2} \subseteq \dots 
	\subseteq \mathcal{F}_{L_k} \subseteq \mathcal{F}.
	\label{eq.SR}
\end{equation}
A plausible identification algorithm would now be {\em to find the linear parameters $a\in\R^d$ such that  the mapping from $\{a^T\U_t\}_t$ to the corresponding values $\{y_t\}_t$ has as small a Lipschitz value 
$L$ as possible}. Since we are only interested at this stage in the parameters $a$ rather than also recovering $f$,  we focus attention on the given samples only. 
Consequently, sufficient condition for our needs for a function $f$ to be  Lipschitz smooth with constant $L$ is 
\begin{equation}
	\left|f(a^T\U_i) - f(a^T\U_j)\right| \leq L \left| a^T(\U_i -\U_j)\right|, \ \forall i<j=d,\dots,T.
	\label{eq.lipschitz2}
\end{equation}
By exploiting the monotonicity property of $f$, 
one can write the $O(T^2)$ constraints equivalently using only $O(T)$ constraints by ordering the data.
This step captures the function $f$ implicitly (see Figure (\ref{fig.mon})), proven as follows. 
\begin{Lemma}[Existence of a Transformation Function]
	Given a collection of pairs $\{(z_{(i)},y_{(i)})\}_{i=1}^n$, 
	enumerated such that $y_{(i)}\leq y_{(j)}$ if and only if $i\leq j$.
	Then we consider the sample conditions for $L<\infty$:
	\begin{equation}
			0\leq (y_{(j)}-y_{(i)})\leq L\left(z_{(j)}-z_{(i)}\right) \ \ \forall i<j=1,\dots,n,
			\label{eq.Ln}
		\end{equation}
		
	\begin{enumerate}
	\item If one has for a finite  $L>0$ that  (\ref{eq.Ln}) holds,
		then there exist a monotonically increasing function $f:\R\rightarrow\R$ 
		with Lipschitz constant $L$ interpolating the samples.
		
	\item If one has that for all admissible $(z,y)\in\R\times\R$ one has 
		that $y = f(z)$ for an (unknown) 
		continuous, (finite) differentiable and monotonically increasing 
		function $f:\R\rightarrow\R$, 
		then there is an $L<\infty$ such that 
		(\ref{eq.Ln}) holds.
	\end{enumerate}
\end{Lemma}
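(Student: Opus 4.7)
The plan is to treat the two parts separately: the first is a constructive interpolation argument, while the second reduces essentially to a mean--value observation.

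For part (1), I would first extract from the hypothesis a crucial ordering property: since $y_{(j)} - y_{(i)} \geq 0$ for $i<j$ and the constraint (\ref{eq.Ln}) gives $L(z_{(j)}-z_{(i)}) \geq y_{(j)}-y_{(i)} \geq 0$ with $L>0$, we have $z_{(i)} \leq z_{(j)}$ whenever $i<j$. Hence the samples are simultaneously sorted in $z$ and in $y$. I would then define $f$ explicitly as the piecewise linear interpolant of the points $\{(z_{(i)}, y_{(i)})\}_{i=1}^n$, extended by the constants $y_{(1)}$ on $(-\infty, z_{(1)}]$ and $y_{(n)}$ on $[z_{(n)}, \infty)$. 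The only subtlety is ties $z_{(i)} = z_{(i+1)}$, but then (\ref{eq.Ln}) forces $y_{(i)} = y_{(i+1)}$, so the tied samples can be merged without changing the function. After merging, every interpolation slope equals $(y_{(i+1)} - y_{(i)})/(z_{(i+1)} - z_{(i)})$, which by (\ref{eq.Ln}) lies in $[0, L]$. Non--negativity of all slopes yields monotone increase, while the uniform bound $L$ on the slopes gives the Lipschitz constant $L$ on the whole line (including the constant tails, whose slope is $0$).

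For part (2), the argument is almost immediate once the right finite quantity is identified. Given the samples $\{(z_{(i)}, y_{(i)})\}_{i=1}^n$ with $y_{(i)} = f(z_{(i)})$ and $f$ monotonically increasing, differentiable and finite--valued, the monotonicity of $f$ again forces the $z_{(i)}$ to be sorted in the same order as the $y_{(i)}$; in particular $z_{(j)} = z_{(i)}$ implies $y_{(j)} = y_{(i)}$ and conversely the secant slopes are well-defined whenever the two $y$-values differ. I would then define
\begin{equation}
	L \;=\; \max_{\substack{i<j \\ z_{(i)} \neq z_{(j)}}} \frac{y_{(j)} - y_{(i)}}{z_{(j)} - z_{(i)}},
\end{equation}
which is a maximum over a finite collection of finite, non--negative numbers (each is a secant slope of $f$, equal by the mean--value theorem to $f'(\xi_{ij})$ for some $\xi_{ij}$ between $z_{(i)}$ and $z_{(j)}$, hence finite). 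With this $L<\infty$, inequality (\ref{eq.Ln}) holds by construction for every pair with $z_{(i)} \neq z_{(j)}$, and trivially for the remaining pairs where both sides vanish.

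The main obstacle I anticipate is purely technical rather than conceptual: it is the careful handling of coincident $z$-values in both parts. In part (1) one must argue that such ties are consistent with the hypothesis and can be absorbed by merging; in part (2) one must exclude them from the maximum defining $L$. Beyond this bookkeeping, part (1) is a direct piecewise linear construction and part (2) is a finite--maximum argument, so no further analytic machinery is required.
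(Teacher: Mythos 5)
Your proof is correct, and for part (1) it follows essentially the same route as the paper: a piecewise-linear interpolant whose slopes are pinned to $[0,L]$ by the sample constraints, with the preliminary observation that (\ref{eq.Ln}) forces the $z$-ordering to agree with the $y$-ordering. Your version is in fact a little tighter on the bookkeeping: you say explicitly how the interpolant is extended outside $[z_{(1)},z_{(n)}]$ (constant tails) and how coincident $z$-values are absorbed, two points the paper's construction of $f_n$ via $\overline{z}(z)$ and $\underline{z}(z)$ leaves implicit.

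For part (2) you take a genuinely different, and arguably safer, route. The paper invokes the mean value theorem on each pair and then sets $L=\sup_z f'(z)$; this presumes that the supremum of the derivative is finite, which pointwise (finite) differentiability alone does not guarantee, even on a compact interval. You instead define $L$ directly as the maximum of the finitely many secant slopes $\bigl(y_{(j)}-y_{(i)}\bigr)/\bigl(z_{(j)}-z_{(i)}\bigr)$ over pairs with distinct abscissae; each such slope is a finite nonnegative number, so $L<\infty$ is immediate and (\ref{eq.Ln}) holds by construction. This makes the differentiability hypothesis essentially superfluous for the finite-sample statement and closes the small gap in the paper's argument. The only caveat, which affects both your proof and the paper's, is the degenerate case of two samples lying on a flat part of $f$ with reversed $z$-order; this is excluded by the lemma's ``if and only if'' enumeration of the $y_{(i)}$, which rules out ties in the outputs.
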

\begin{proof}
	To proof 1, consider the linear interpolation function $f_n:\R\rightarrow\R$, defined as 
	\begin{equation}
		f_n(z) = \frac{z - z_{\underline{z}(z)}}{z_{\overline{z}(z)} - z_{\underline{z}(z)}}
		\left(y_{\overline{z}(z)}-y_{\underline{z}(z)}\right) + y_{\underline{z}(z)},
		\label{eq.hn}
	\end{equation}
	where we define 
	$\overline{z}(z) = \arg\min_i(z_i: z_i\geq z)$ 
	and $\underline{z}(z) = \arg\max_i(z_i: z_i\leq z)$.
	Direct manipulation shows that this function is monotonically increasing and continuous.
	Now take $z<z'\in\R$, then we have to show that $f_n(z') - f_n(z) \leq L (z'-z)$.
	For convenience of notation define
	$l=\underline{z}(z)$, $u=\overline{z}(z)$, $l'=\underline{z}(z')$ and $u'=\overline{z}(z')$, then 
	\begin{multline}
		\frac{z'- z_{l'}}{z_{u'}-z_{l'}} (y_{u'}-y_{l'}) - \frac{z- z_l}{z_u-z_l} (y_u-y_l) + (y_{l'} - y_{l})\\
		\leq L(z' - z_{l'}) - L(z-z_l) + L(z_{l'}-z_l)\\
		= L(z'-z),
		\label{eq.hn}
	\end{multline}
	since $z_l\leq z_{l'}$ and $y_l\leq y_{l'}$ by definition.

	Item 2 is proven as follows.
	Let $f'$ be the derivative of a differentiable function $f_0$, 
	then the mean value theorem asserts that for any two samples 
	$(z_i,y_i)$ and $(z_j,y_j)$ for which	$z_i\leq z_j$,
	there exists a $z\in(z_i,z_j)\subset\R$ such that 
	\begin{equation}
		(y_j-y_i)
		= (z_j-z_i) f'(z) \leq L (z_i-z_j)
		\label{eq.mv}
	\end{equation}
	where $L = \sup_z f'(z)$.
\end{proof}
\begin{figure}[htbp] %  figure placement: here, top, bottom, or page
    \begin{center}\includegraphics[width=2.5in]{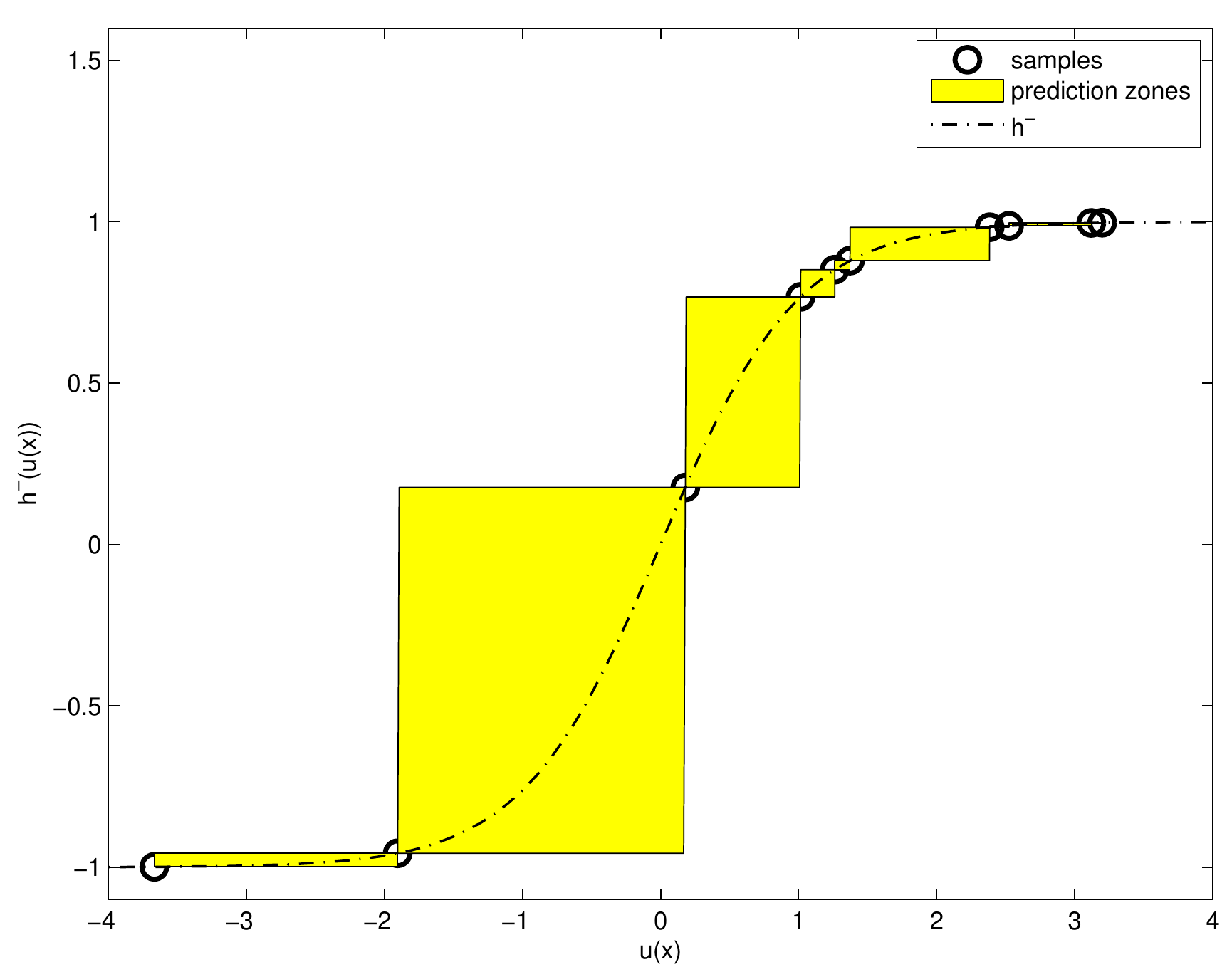}\end{center}
   \caption{\em
	Schematic representation of Lemma 1.
	If a Lipschitz-smooth monotone $f$ exists (black dash-dotted curved line), 
	samples obey the pairwise Lipschitz constraint.
	If samples exist satisfying the Lipschitz constraints, one can always 
	find a monotone function interpolating this samples (indicated by the yellow blocks).
   }
   \label{fig.mon}
\end{figure}
This observation motivates the following procedure: 
find parameters $a$ such that $f$ has minimal Lipschitz condition.
The solution is given by solving
\begin{multline}
	\max_{a}\min_{y_i\neq y_j} \frac{|a^T(\U_i-\U_j)|}{|y_i-y_j|} \\
	\mbox{\ \ s.t. \ \ } a^T\U_{(i)} \geq a^T\U_{(i-1)}, \ \ \forall i=d+1,\dots,T,
	\label{eq.minlip1}
\end{multline}
or
\begin{multline}
	\min_{a,L} \ L^2 \
	\mbox{\ \ s.t. \ \ } \|a\|_2=1, \\ 
	(y_{(i)}-y_{(i-1)}) \leq L \left(a^T(\U_{(i)}-\U_{(i-1)})\right), \ \ \forall i=d+1,\dots,T.
	\label{eq.minlip2}
\end{multline}
where we have only a linear number of constraints.After a change of variable, we can write equivalently
\begin{Definition}[MINLIP]
	Given an ordered set of examples $\{(\U_{(i)},y_{(i)})\}_{i=d}^T \subset \R^d\times\R$ indexed such that 
	$y_{(i-1)}\leq y_{(i)}$ for all $i=d+1,\dots,T$, then our (rescaled) estimate $a_T$ follows by solving
	\begin{multline}
		a_T = \argmin_{a} \ a^Ta
		\mbox{\ \ s.t. \ \ } \\ (y_{(i)}-y_{(i-1)}) \leq  a^T(\U_{(i)}-\U_{(i-1)}), \ \forall i=d+1,\dots,T.
		\label{eq.minlip}
	\end{multline}
	where the estimated function $\hat{f}$ is specified only implicitly as in Proposition 1.
\end{Definition}
This problem can be cast as a convex Quadratic Program (QP) with $T-d$ linear constraints
and $d$ unknowns.
This problem can be solved efficiently with contemporarily solvers available in most 
mathematical packages\footnote{In our experiments we use the solver available at http://www.mosek.org.}. 
The $a_T$ which minimizes this constrained objective is our estimate of a 
(rescaled) version of the parameters of the FIR system $H(q)$. The problem is written in matrix notation as 
\begin{equation}
	a_T = \argmin_{a\in\R^d} a^Ta \mbox{\ s.t. \ } (\Delta\U) a  \leq \Delta y, 
	\label{eq.minlip2}
\end{equation}
where $\U=(\U_{(d)}, \dots, \U_{(T)})\in\R^{(T-d+1)\times d}$ is a Hankel matrix (up to the sorting!), 
$y=(y_{(d)}, \dots, y_{(T)})\in\R^{T-d+1}$ is an ordered vector and 
\begin{equation}
	\Delta = \bmat{-1&1 &0 && 0 \\
			0 &-1&1 && \\
			&&\ddots&\ddots &\\
			0 &&& -1&1}\in\{-1,0,1\}^{(T-d)\times(T-d+1)}. 
	\label{eq.minlip2}
\end{equation}
In order to improve reproducibility of the result and stress practical use,
a full MATLAB implementation in 8 lines of code is given in Alg. (\ref{alg.minlip}).
In order to extend this implementation to handle general cases
one should take additional care of (possible) ties of output samples 
(See Subsection IV.C for a practical way to cope with this issue).

\begin{algorithm}
\caption{A MATLAB implementation of MINLIP}
\label{alg.minlip}
%\begin{algorithmic}
{\small\begin{verbatim}
function a = MINLIP(u,y,d)
n  = length(y);
x1 = toeplitz(u(d:end),u(d:-1:1)); 
y1 = y(d:end);
[ys,si]=sort(y1); 
xs = x1(si,:);
e  = ones(n,1);
D  = full(spdiags([-e e],[0 1],n-d,n-d+1));
a  = quadprog(eye(d),zeros(d,1),-D*xs,-D*ys);
\end{verbatim}}
%\end{algorithmic}
\end{algorithm}

In a second phase,
it could be useful to reconstruct $f_0$ based on $a_T$ and the samples $\{(a_T^T\U_t,y_t)\}_{t=d}^T$.
We suggest to use the linear interpolation defined in (\ref{eq.hn}) to proof Lemma 1.
This function is by construction Lipschitz smooth with constant $L = \sqrt{a_T^Ta_T}$.
It is not too difficult to come up with more parsimonious estimators of the univariate function $f_0$ based 
on the bivariate samples $\{(a_T^T\U_t,y_t)\}_{t=d}^T$ which behaves more robust against modeling errors.

\subsection{Almost Consistency of MINLIP in the Noiseless Case}
%%%%%%%%%%%%%%%%%%%%%%%%%%%%%%%%%%

This section characterizes how well the estimate approach the true impulse response,
under suitable assumptions on the data and the static nonlinearity.
Particularly, we assume that the observations arise from a {\em true} Monotone Wiener model with 
FIR system of given order for the dynamic part, and that no noise perturbs the observations.
This problem is non-trivial as 
(i) the proposed model is essentially nonlinear and non-parametric, and 
(ii) the method is not based directly on minimizing a mismatch between the model and the observations. 
The main outcome is that (approximate) consistent estimates
are given when the data satisfies a condition of local Persistency of Excitation (PE), and 
the true monotone static nonlinearity is smooth around its steepest part.

This result is referred to here as {\em almost consistency}, 
as it guarantees accuracy of the estimates only up to a small (but often non-zero) approximation term.
In a sense, this is the best one could hope for here because of two reasons,:
(a) the model is non-parametric (or semi-parametric) as the static monotone function 
of the model cannot be expressed straightforwardly in terms of a (small number of) parameters.
In that respect, a finite dataset contains never have enough information in order to reconstruct this system exactly.
(b) A finite dataset can never be locally exciting in every (arbitrary small) neighborhood, but can only guarantee 
this condition for all localities which are sufficiently large.
Classical concepts as bias or variance do not cover this notion as no stochastic assumptions are made.
The question wether approximate consistency implies asymptotic consistency requires one as well 
to make additional stochastic assumptions underlying the data, and is not covered in this text as such.
The analogue for linear estimating of a FIR model goes as follows:
assume the system can be described exactly as a FIR model of given order (smaller than) $d$,
then the corresponding notion is that the least squares estimate is {\em exactly} consistent
if the data is (globally) PE to an order $d$. Since we assume that there is no noise in the data,
there is no approximation to be made here. 
This difference can be seen as the consequence of the semi-parametric model 
of the Monotone Wiener system where in general no finite/small parameterization exists matching the system.

Assume that the observed system obeys the relation given in (\ref{eq.wiener.fir}) with a fixed (but unknown) 
monotonically increasing function $f_0:\R\rightarrow\R$ which is Lipschitz monotone with constant $L_0<\infty$, 
and parameter vector $a_0\in\R^d$. We refer to those as to the {\em true function $f_0$} and the {\em true parameters $a_0$} 
respectively. We address the question wether if we see enough data (or $T\rightarrow\infty$), 
the MINLIP estimate $a_T$ will equal $a_0$ up to a scaling constant.
Formally, we consider the MINLIP estimator based on the (infinite) set $\Obs$ as 
\begin{multline}
	\hat{L} = \min_{L,\|a\|_2=1} L \\ 
	\mbox{ \ s.t. \ } (y-y') \leq L a^T(\U-\U'), \ \forall (\U,y),(\U',y')\in\Obs, y>y'. 
	\label{eq.minlip5}
\end{multline}
Sometimes it will be convenient to rewrite the MINLIP estimator (\ref{eq.minlip5})
as the following minimax problem:
\begin{equation}
	\ell = \max_{\|a\|_2=1} \inf_{(\U,y),(\U',y)\in\Obs: y>y'} \frac{a^T(\U-\U')}{y-y'}.
	\label{eq.minlip6}
\end{equation}
where $\ell\geq \frac{1}{L_0}$ by construction of $L_0$.
%This problem is equivalent to 
%\begin{equation}
%	\ell = \max_{\|a\|_2=1} \inf_{(\U,y),(\U',y)\in\Obs: y>y'} \frac{a^T\overline{(\U-\U')}}{L' a^T_0\overline{(\U-\U')}},
%	\label{eq.minlip7}
%\end{equation}
%where we define an $L'\in\R$ as 
%\begin{equation}
%	\frac{1}{L'} = \frac{a_0^T(\U-\U')}{(y-y')},
%	\label{eq.minlip8}
%\end{equation}
%for any couple $(\U,y), (\U',y')\in\Obs$ where $y\neq y'$ 
%(We suppress dependence of $L'$ on this sample for notational convenience).
%Note that $L'\leq L_0$ by construction.
%Moreover, we let  $\overline{\U-\U'}$ have norm 1 such that $(\U-\U') = \overline{(\U-\U')} \|\U-\U'\|_2$.
%It is seen from (\ref{eq.minlip7}) that this estimator will focus attention on the samples 
%where $L'\approx L_0$ and $a^T_0\overline{(\U-\U')}\approx 1$, such that $\hat{w}\propto w_0$. 
%In order to prove this formally, 
%we need to impose a condition on $f_0$.
In order to characterize the solution, the following two conditions are needed. 
\begin{Definition}[$f$ is $(L_0,g)$-Lipschitz on $\Obs'\subseteq\R$]
	The function $f$ is said to be $(L_0,g)$-Lipschitz on $\Obs'\subseteq\R$ 
	for a decreasing, positive function $g:\R^+\rightarrow\R^+$ with $g(0)=1$ if:
	(A) one has for all $z,z'\in\Obs'$ that 
	\begin{equation}
		\left(f(z) - f(z')\right) 
		\leq L_0\left(z-z'\right).
		\label{eq.lip0}
	\end{equation}	
	(B) there exists a $z\in\Obs'$ and $z'\in\Obs'$ such that 
	\begin{equation}
		\left(f(z) - f(z')\right) 
		= L_0\left(z-z'\right), 
		\label{eq.lip1}
	\end{equation}
	and (C) one has for this $z$, for any $\epsilon>0$ and 
	$z''\in\Obs'$ where $|z-z''|\leq \epsilon$ that 
	\begin{equation}
		\left|f(z) - f(z'')\right| \geq g(|z-z''|) L_0 |z-z''| .
		\label{eq.lip2}
	\end{equation}
\end{Definition}
Hence $g$ denotes how 'smooth' the constant $L$ decays in a neighborhood of $z$ 
where the actual Lipschitz constraint is met (that is, a slower decaying function $g$ indicates a higher smoothness).
In particular, a value $h(\epsilon)=1$ implies that the function $f$ is linear with slope $L_0$
in this neighborhood. 
Such characterization is illustrated for $f(z)=\tanh(z)$ in Figure (\ref{fig.lip})
for a smoothness function $g(\epsilon) = 1/(1+c\epsilon)$.

\begin{Definition}[$\epsilon$-Local Persistently Exciting]
	We say that a set $\Obs\subseteq\R^d$
	is $\epsilon$-local persistent exciting of order $d$	
	for $\epsilon>0$
	iff for any vector $\U\in\Obs$,
	there	 exist $d$ vectors $\U_1,\dots,\U_d\in\Obs$
	with $\{\U-\U_k\}_{k=1}^m$ linearly independent vectors
	and
	\begin{equation}
		\|\U-\U_k\|_2\leq \epsilon, \ \forall k=1,\dots,d.
		\label{eq.localpe1}
	\end{equation}
\end{Definition}
This definition can be seen as a {\em local} version of Persistency of Excitation (PE),
see e.g. \cite{ljung87,soderstrom1989,goodwin1984} for the classical definition of PE.
\begin{figure}[htbp] %  figure placement: here, top, bottom, or page
    \begin{center}\includegraphics[width=2.5in]{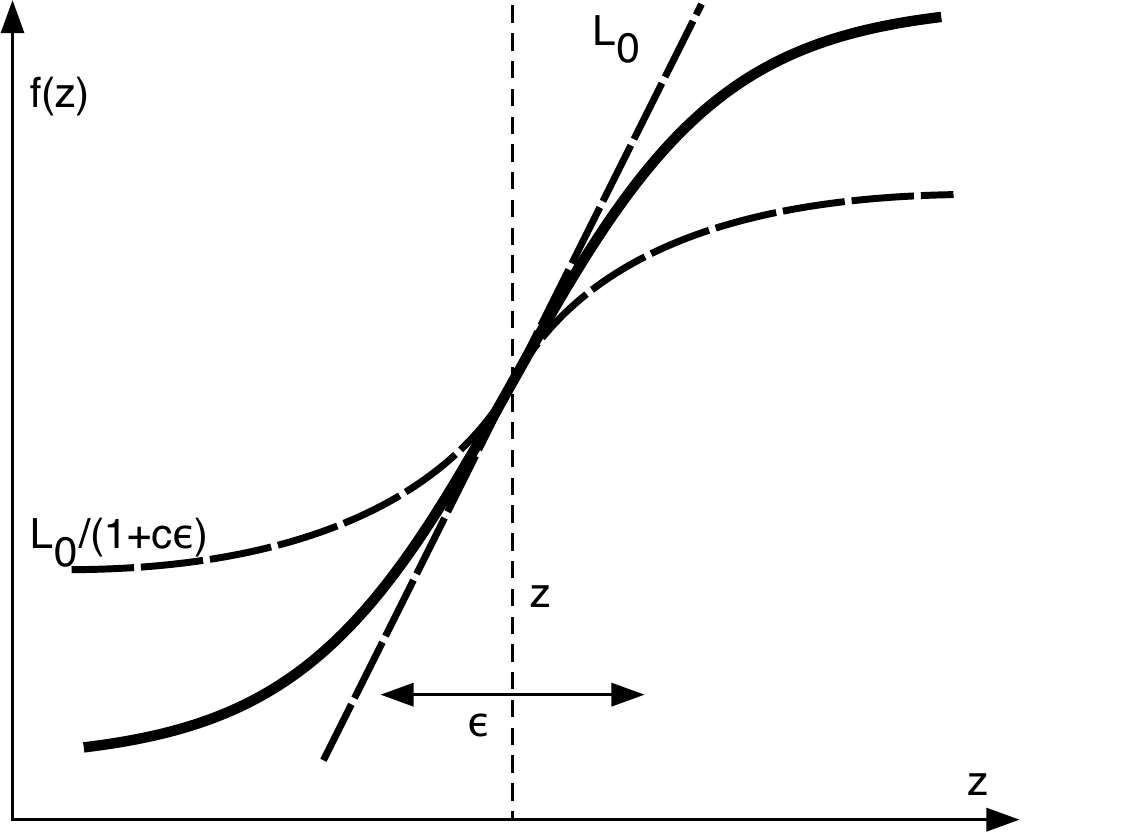}\end{center}
   \caption{\em
   	Schematic illustration of the $(L_0,g)$-Lipschitz property of a function $f$ 
	with $g(\epsilon)=\frac{1}{1+c\epsilon}$.
	There should be a sample $z$ where the Lipschitz constant $L_0$ is 
	attained, and in the $\epsilon$-neighborhood of this sample $z$
	the Lipschitz-property shouldn't decay too fast, e.g. in the neighborhood 
	of $z$ the function behave almost linearly.
   }
   \label{fig.lip}
\end{figure}

\begin{Theorem}[Almost Consistency]
	Fix $\epsilon>0$ and consider the $(f_0,a_0)$-Wiener system as in (\ref{eq.monowien})
	with corresponding observations in $\Obs$.
	If $f_0:\R\rightarrow\R$ is $(L_0,g)$-Lipschitz and monotone on the set 
	$\{(z,y): z=a_0^T\U: \U\in\Obs\}$ and $\Obs$ is $\epsilon$-local PE, then
	\begin{equation}
		a_T^Ta_0 \geq g(\epsilon),
		\label{eq.consistent}
	\end{equation}
	where $a_T$ is the estimate of MINLIP as in (\ref{eq.minlip}).
\end{Theorem}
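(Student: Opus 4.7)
My plan is a four-step argument. First, observe that the true parameter $a_0$ (with $\|a_0\|_2=1$) is feasible for MINLIP in the normalized form (\ref{eq.minlip5}) at Lipschitz level $L_0$: for any $(\U,y),(\U',y')\in\Obs$ with $y>y'$, monotonicity of $f_0$ forces $a_0^T(\U-\U')>0$, and condition (A) of the $(L_0,g)$-Lipschitz property yields $y-y'=f_0(a_0^T\U)-f_0(a_0^T\U')\leq L_0\,a_0^T(\U-\U')$. Therefore the optimum in (\ref{eq.minlip6}) satisfies $\ell\geq 1/L_0$, so the maximizer $a_T$ (taken with $\|a_T\|_2=1$) fulfills $a_T^T(\U-\U')\geq(y-y')/L_0$ for every ordered pair in $\Obs$.

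Second, I would localize the analysis around a reference sample. Condition (B) provides $\U^*,\U^{*'}\in\Obs$ at which the Lipschitz bound is tight, so in particular $f_0$ attains slope $L_0$ at $z^*=a_0^T\U^*$. The $\epsilon$-local PE hypothesis applied at $\U^*$ then delivers $d$ linearly independent difference vectors $v_k=\U^*-\U_k$ with $\|v_k\|_2\leq\epsilon$. For each $k$, monotonicity aligns the sign of $y^*-y_k$ with that of $a_0^T v_k$, while Cauchy--Schwarz gives $|a_0^T v_k|\leq\epsilon$. Condition (C) applied at $z''=a_0^T\U_k$, together with the fact that $g$ is decreasing, yields $|y^*-y_k|\geq g(|a_0^T v_k|)L_0|a_0^T v_k|\geq g(\epsilon)L_0|a_0^T v_k|$. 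Coupling this with the MINLIP feasibility from Step~1 applied to $(\U^*,\U_k)$ ordered by $\sign(y^*-y_k)$ produces, for each $k$,
\begin{equation*}
\sign(a_0^T v_k)\,a_T^T v_k\;\geq\; g(\epsilon)\,|a_0^T v_k|,
\end{equation*}
equivalently $(a_T^T v_k)(a_0^T v_k)\geq g(\epsilon)(a_0^T v_k)^2$.

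The last step is to convert these $d$ local inequalities into the scalar bound $a_T^T a_0\geq g(\epsilon)$, and this is where I expect the principal obstacle. Summing yields the matrix inequality $a_T^T M a_0\geq g(\epsilon)\,a_0^T M a_0$ with $M=\sum_k v_k v_k^T$ positive definite by linear independence, but $M\neq I$ in general, so the target scalar inequality does not drop out directly. Writing $\tilde v_k=\sign(a_0^T v_k)\,v_k$, the local bounds place $(a_T-g(\epsilon)a_0)$ in the dual cone of $\{\tilde v_k\}$; reaching the desired conclusion requires $a_0$ to belong to the primal (non-negative) cone of the same family, which the bare linear-independence statement of $\epsilon$-local PE does not automatically provide. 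I would attempt to close the argument either (a) by invoking the KKT conditions of the MINLIP QP (\ref{eq.minlip}), which represent $a_T$ as a non-negative combination of consecutive sorted differences $\U_{(i)}-\U_{(i-1)}$ and hence constrain its direction to lie in the cone of ``increasing-$y$'' differences where $a_0$ sits, or (b) by decomposing $a_T=\alpha a_0+b$ with $b\perp a_0$ and $\alpha^2+\|b\|_2^2=1$, and arguing that $\alpha<g(\epsilon)$ forces at least one of the $d$ local constraints to fail for some $v_k$ exposed by the $\epsilon$-neighborhood. Either route effectively translates the ``richness'' of $\Obs$ near $\U^*$ into a rank/conditioning property of the local basis $\{v_k\}$.
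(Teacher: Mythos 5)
Your first two steps reproduce the paper's strategy almost exactly: feasibility of $a_0$ at level $L_0$, localization at the extremal pair provided by condition (B), extraction of $d$ linearly independent small differences $v_k=\U^*-\U_k$ from $\epsilon$-local PE, and the per-direction inequality $(a_T^Tv_k)(a_0^Tv_k)\geq g(\epsilon)(a_0^Tv_k)^2$ obtained by combining condition (C) with the MINLIP constraints. Where you stop, however, is precisely where the theorem still has to be proved: you present two candidate routes for passing from the $d$ local inequalities to the scalar bound $a_T^Ta_0\geq g(\epsilon)$ but execute neither, so as it stands the proposal is incomplete. The paper closes this step as follows: normalize and orient the differences so that $\overline{(\U-\U_k)}^Ta_0\geq 0$, stack them into $D_\U\in\R^{d\times d}$ (invertible by linear independence), expand \emph{both} vectors in this basis, $a_0=D_\U^T\alpha_0$ and $a_T=D_\U^T\alpha$, rewrite the per-direction inequalities as the vector inequality $\frac{1}{\ell}LD_\U D_\U^T\alpha_0\leq D_\U D_\U^T\alpha$ with $L=\diag(L_1,\dots,L_d)$ and $L_k\geq g(\epsilon)L_0$, and then left-multiply by $\alpha_0^T$ to get
\begin{equation*}
a_0^Ta_T=\alpha_0^T(D_\U D_\U^T)\alpha\geq\frac{g(\epsilon)L_0}{\ell}\,\alpha_0^T(D_\U D_\U^T)\alpha_0=\frac{g(\epsilon)L_0}{\ell}\,\|a_0\|_2^2\geq g(\epsilon),
\end{equation*}
using $\ell\leq L_0$ for the minimal Lipschitz value $\ell$ of (\ref{eq.minlip5}).

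You should note, though, that your diagnosis of the obstacle is accurate and applies to the paper's own completion: left-multiplying a componentwise vector inequality by $\alpha_0^T$ preserves the inequality only if $\alpha_0\geq 0_d$ entrywise, i.e.\ only if $a_0$ lies in the non-negative cone generated by the oriented basis $\{\overline{(\U-\U_k)}\}_k$ --- exactly the ``primal cone'' membership you say the bare linear-independence form of $\epsilon$-local PE does not supply. The orientation convention $\overline{(\U-\U_k)}^Ta_0\geq 0$ guarantees $D_\U a_0\geq 0_d$, which is a statement about the \emph{dual} cone and does not imply $\alpha_0=D_\U^{-T}a_0\geq 0_d$ unless $D_\U D_\U^T$ has special structure. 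So your step three is not a failure of imagination on your part but a genuine gap shared with the source; to make either argument airtight one would need to strengthen the local PE condition (e.g.\ to require that the local differences form a well-conditioned or conic spanning set around $a_0$) or invoke the KKT representation of $a_T$ as you suggest in route (a). Of your two proposed routes, (a) is the more promising, since the dual variables of the QP (\ref{eq.minlip}) do express $a_T$ as a non-negative combination of sorted differences, but you would still need to relate that global cone to the local one at $\U^*$.
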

This means that the smoother the function $f_0$ is towards its steepest part, the better estimates we get.
In particular, when $f_0$ is (almost) linear - we only need global PE to have exact estimates $a_T\propto a_0$.
Specifically, if $g(\epsilon)=1$ - meaning that the function $h$ is linear - only global persistency of excitation 
is required to have consistency,  and the MINLIP estimator will return the same result as a linear estimator.

\begin{proof}
	Let the Lipschitz constant  be achieved in the sample $(\U,y),(\U',y')\in\Obs$, such that 
	\begin{equation}
		(y-y') =L_0 (\U-\U')^Ta_0.
		\label{eq.lmm2.1}
	\end{equation}
	As the set $\Obs$ is $\epsilon$-local persistent exciting of order $d$, 
	one can find for  the sample $(\U,y)\in\Obs$ $d$ vectors $\U_1,\dots,\U_d$ contained in
	$\Obs$ such that the vectors $(\U-\U_1), \dots, (\U-\U_d)$ are linearly independent 
	and have norm smaller than $\epsilon$.
	This implies that one can rewrite $a_0,a_T\in\R^d$ 
	(i.e. the true parameter vector and the optimal estimate associated to (\ref{eq.minlip5})) as 
	\begin{equation}
		\begin{cases}
			a_0 = \sum_{k=1}^d \alpha_{0,k} \overline{(\U-\U_k)} \\
			a_T = \sum_{k=1}^d \alpha_{k} \overline{(\U-\U_k)}, 
		\end{cases}
		\label{eq.lmm2.2}
	\end{equation}
	where $\alpha,\alpha_0\in\R^d$, 
	and we let $(\U-\U_k) = \sigma_k\overline{(\U-\U_k)}\|(\U-\U_k)\|_2$ with $\sigma_k\in\{-1,1\}$ such that 
	$\|\overline{(\U-\U_k)}\|_2=1$ and $\overline{(\U-\U_k)}^T a_0\geq 0$ for all $k=1,\dots,d$. 
	Define as previously for each $k=1,\dots,d$ the constant $L_k\in\R_+$ such that 
	$(y-y_k) = L_k (\U-\U_k)^Ta_0$, where $L_k\leq L_0$ by construction.
	Now define the matrix $D_\U\in\R^{d \times d}$ as 
	\begin{equation}
		D_\U = \bmat{\overline{(\U-\U_1)} \\ \overline{(\U-\U_2)} \\ \vdots \\ \overline{(\U-\U_d)} },
		\label{eq.lmm2.3}
	\end{equation}
	such that $D_\U a_0\geq 0_d$, 
	and the matrix $L\in\R^{d\times d}$ as  $L = \diag(L_1,\dots, L_d)$.
	Then we have in matrix notation that $a_0 = D_\U^T\alpha_0$ and $a_T = D_\U^T\alpha$.
	By construction of the MINLIP estimator we have that 
	\begin{equation}
			LD_\U D_\U^T \alpha_0 = LD_\U a_0 \leq \ell D_\U a_T = \ell D_\U D_\U^T \alpha,
			\label{eq.lmm2.3b}
	\end{equation} 
	where $\ell$ is the minimal value obtained in (\ref{eq.minlip5}).
	As such $\frac{1}{\ell}LD_\U D_\U^T \alpha_0 \leq D_\U D_\U^T \alpha$.
	Then one has 
	\begin{multline}
		a_0^Ta_T = \alpha_0^T(D_\U D_\U^T) \alpha
		\geq  \frac{1}{\ell} \alpha_0^T  L(D_\U D_\U^T) \alpha_0\\
		\geq \frac{g(\epsilon)L_0}{\ell} \alpha_0^T(D_\U D_\U^T) \alpha_0 \\
		\geq g(\epsilon) \alpha_0^T(D_\U D_\U^T) \alpha_0 
		\geq g(\epsilon),
		\label{eq.lmm2.4}
	\end{multline}
	since $\ell\leq L_0$.	This proofs the result.
\end{proof}

\section{IDENTIFICATION WITH NOISY DATA}
%%%%%%%%%%%%%%%%%%%%%%%

This section considers how to modify the estimator towards the case
of noise being present in the data, or where the data-samples are
only approximated by a monotone FIR $(f,a)$-system.
Empirical evidence is provided for the use of this estimator.

\subsection{Model Class}
%%%%%%%%%%%%%

There are a number of different ways one can model noise in the class of monotone wiener systems.
A first one is to consider noise on the measured outputs (or {\em measurement noise}). One may argue that this 
model is not a very realistic assumption in case the observations are a quantized version of the output of the linear system. 
That is, once the signal is quantized (and transmitted), it can often be measured without error.
On the other hand,  it is often not clear which noise model of a quantized signal (with a finite number of different levels)
fits the application (a Gaussian distribution would not make much sense here). 
Another assumption one can make is that noise occurs in the signal $\{u_t\}_t$, 
but as this results in coloring of the noise by the unknown linear subsystem, this model is not adopted as yet.
A third alternative is that the noise comes in between the  linear subsystem and the monotone static 
function (see Fig. \ref{fig.ewiener}). As in the following no restrictive assumptions (as whiteness of the noise signal)
is assumed, this could be seen as uncertainty coming in in the model by under-modeling of the linear system.
This is the view which underlies the following definitions.

\begin{Definition}[Noisy FIR Wiener Model $(f,a)$]
  A FIR Wiener model consists of a sequence of 
  (i) a linear dynamical model characterized by an impulse response function $H(q^{-1})$ 
  applied on the {\em input} signal $\{u_t\}_t$,
  (ii) a static nonlinear function $f:\R\rightarrow\R$, and 
  (iii) a sequence of 'noise' terms $\{e_t\}_t$.
  If the signals $\{u_t\}_t$, $\{y_t\}_t$ and $\{e_t\}_t$ follow such a model with 
  'true' subsystem $H_0$ and 'true' function $f_0$, we can write 
  \begin{equation}
  	y_t = f_0\Big( H_0(q^{-1})(u_t) + e_t \Big),
  	\label{eq.wiener}
  \end{equation}
  If the signals $\{u_t\}_t$, $\{y_t\}_t$ and $\{e_t\}_t$ obey a Wiener FIR-model 
  with 'true' function $f_0$ and 'true' parameters $a_0$, or 
  \begin{equation}
  		y_t 	= f_0\left( \sum_{k=1}^{d} a_{0,k} u_{t-k} + e_t\right)
    			= f_0( a_0^T \U_t +e_t),
    \label{eq.wiener.fir}
  \end{equation}
  where $a_0\in\R^d$ and $\|a_0\|_2=1$.
\end{Definition}

\begin{figure}
    \includegraphics[width=2.5in]{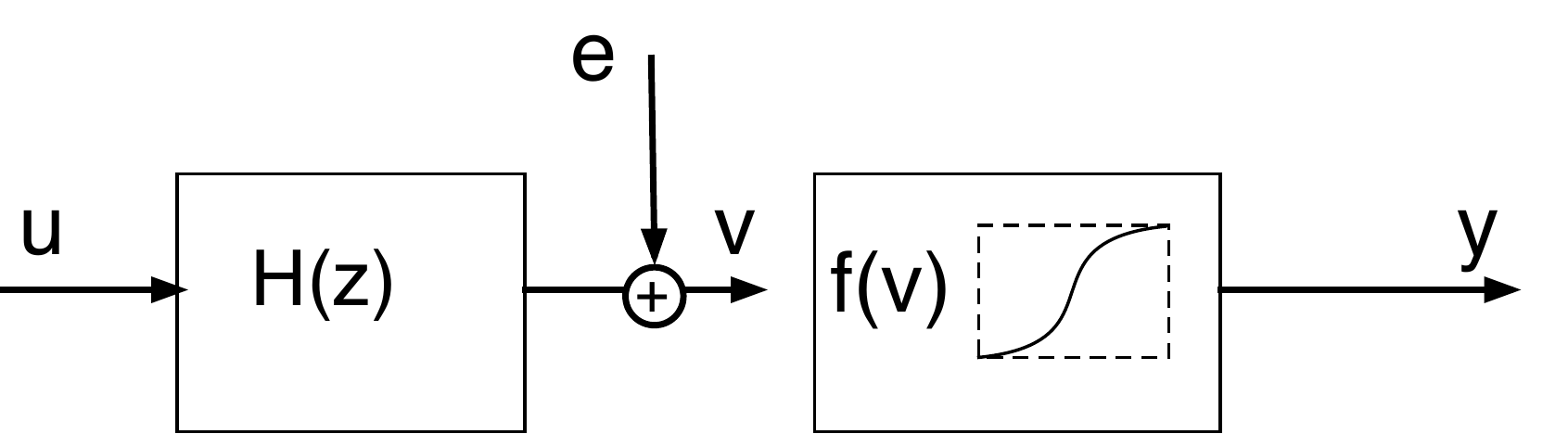}
  \caption{\sl  Schematic representation of a noisy monotone  wiener system.
  	This paper adopts the setting that noise comes in 
	after the linear dynamic part (capturing model mismatch), and 
	right before application of the static nonlinearity (or quantization).
  }
  \label{fig.ewiener}
\end{figure}

\subsection{MINLIP for Noisy Data}
%%%%%%%%%%%%%%%%%%

Given time-series $\{u_t\}_t$ and $\{y_t\}_t$, referred to as 'input'  and 'output'.
Again, let $\{(\U_t=(u_{t-d+1},\dots, u_{t}),y_t)\}_{t=d}^T\subset\R^d\times\R$ 
be a dataset containing $T-d+1$ samples.
Let this set be reindexed as  
$\{(\U_{(j)},y_{(j)})\}_{j=d}^T$ where $y_{(i)}\leq y_{(j)}$ for all $d<i<j\leq T$.
Then adopting the noisy model (\ref{eq.wiener.fir}) suggests 
modification of the standard MINLIP (see eq. (\ref{eq.minlip})) as 
\begin{multline}
	\min_{a,e} \ \frac{1}{2}a^Ta + \frac{\gamma}{2} \sum_{t=d}^T |e_i| \\
	\mbox{\ \ s.t. \ \ }  (y_{(i)}-y_{(i-1)}) \leq 
	(a^T\U_{(i)}+e_{(i)}) - (a^T\U_{(i-1)} + e_{(i-1)}), \\ 
	\ \forall i=d+1,\dots,T,
	\label{eq.nminlip}
\end{multline}
where the fixed {\em regularization parameter} $\gamma>0$
trades the Lipschitz based regularization term and the penalization of the residuals.
The choice of penalizing the absolute loss of the residuals is inspired by 
(i) robustness considerations and the (ii) non-stochastic nature of the residuals where a worst-case approach is more suited.
The tuning of this constant can be done with an appropriate model selection 
criterion as cross-validation.
As before, this optimization problem can be solved efficiently as a convex 
quadratic program (QP) using $O(T)$ unknowns and inequality constraints.

\section{Empirical Evidence}
%%%%%%%%%%%%%%

This section spells out  a number of artificial yet challenging case studies.
A main argument which is made here is that although the underlying system $H_0$
may be represented as a fractional polynomial, 
it is often useful to consider a FIR model consisting of a large number of tapped delays (say 
$d=O(100)$)  which can approach (the impulse response function of) 
$H_0$ arbitrarily close. In this way, one does not have to specify 
explicitly model order or delay of the model. We will refer to this approach as an over-parametrization.
In order to choose the number $d$ of tapped delays in the FIR model, 
one may perform a nonparametric analysis of the impulse response of the data (neglecting nonlinear effects as yet).
It is found that MINLIP is especially appropriate for such an over-parametrization approach and doesn't loose much efficiency 
as it builds in explicitly a mechanism of model complexity control and regularization,  
dealing with ill-posedness problems often present in such a context.

\subsection{The Experimental setup}
%%%%%%%%%%%%%%%%%%%

The Monotone Wiener systems from which the data is generated take the following form.
The linear subsystem $H_0(q^{-1})$ are represented as fractional polynomial models as 
\begin{equation}
	H_0(q^{-1}) = \frac{B(q^{-1})}{A(q^{-1})} 
	= \frac{b_0 + b_1 q^{-1} + \dots + b_{2m_z}q^{-{2m_z}}}{1 + a_1 q^{-1} + \dots + a_{2m_z} q^{-{2m_p}}},
	\label{eq.H1}
\end{equation}
where $2m_z>0$ and $2m_p>0$ denote the orders of the polynomials $A(q^{-1})$ and $B(q^{-1})$.
Those polynomials are chosen such that they have $m_z$ and $m_p$ conjugate pairs of zeros and poles respectively.
The zeros of $A(q^{-1})$ are referred to as poles of $H_0(q^{-1})$, 
and the zeros of $B(q^{-1})$ are referred to as zeros of $H_0(q^{-1})$.
In this example, we set $n_z = 2$ and $n_p=20$.
The conjugate poles and conjugate zeros are uniformly at random picked inside the unit circle
(see Figure \ref{fig.ex1} for an example).
In general, we see that a FIR representation of $d=200$ is sufficient to capture the dynamics of such a system.
The output nonlinearity is fixed as $f_0:\R\rightarrow\R$ where for $x\in\R$ one has
\begin{equation}
	f_0(x) = 2+ \tanh(5x+2) + 0.5\tanh(5x-3).
	\label{eq.f0}
\end{equation}
This function is somewhat challenging as it cannot be described as a simple saturation function, 
is not symmetric around any point, and has an almost zero gradient in $x=0$.
Then a monotone Winer system is constructed as
\begin{equation}
	y_t = f_0(g H_0(q^{-1})u_t), \ \forall t=1,\dots,T,
	\label{eq.h1f}
\end{equation}
where the gain $g>0$ is chosen such that the values $\{gH_0(q^{-1})u_t\}_t$ have a unit standard deviation.
The estimates of MINLIP on a time-series of length T=450, 500,550,600 - taken from 
the Wiener System of Fig. (\ref{fig.ex1}), Fig. (\ref{fig.ex2}) - 
is displayed in Fig. (\ref{fig.ex2}). 
Here a FIR approximation of $d=200$ is used, capturing the dynamics of the system 
$H_0$ reasonably well (see Fig. (\ref{fig.ex1}.a)).

\begin{figure}[htbp] %  figure placement: here, top, bottom, or page
	\begin{tabular}{lr}
	\subfigure{\includegraphics[width=1.6in]{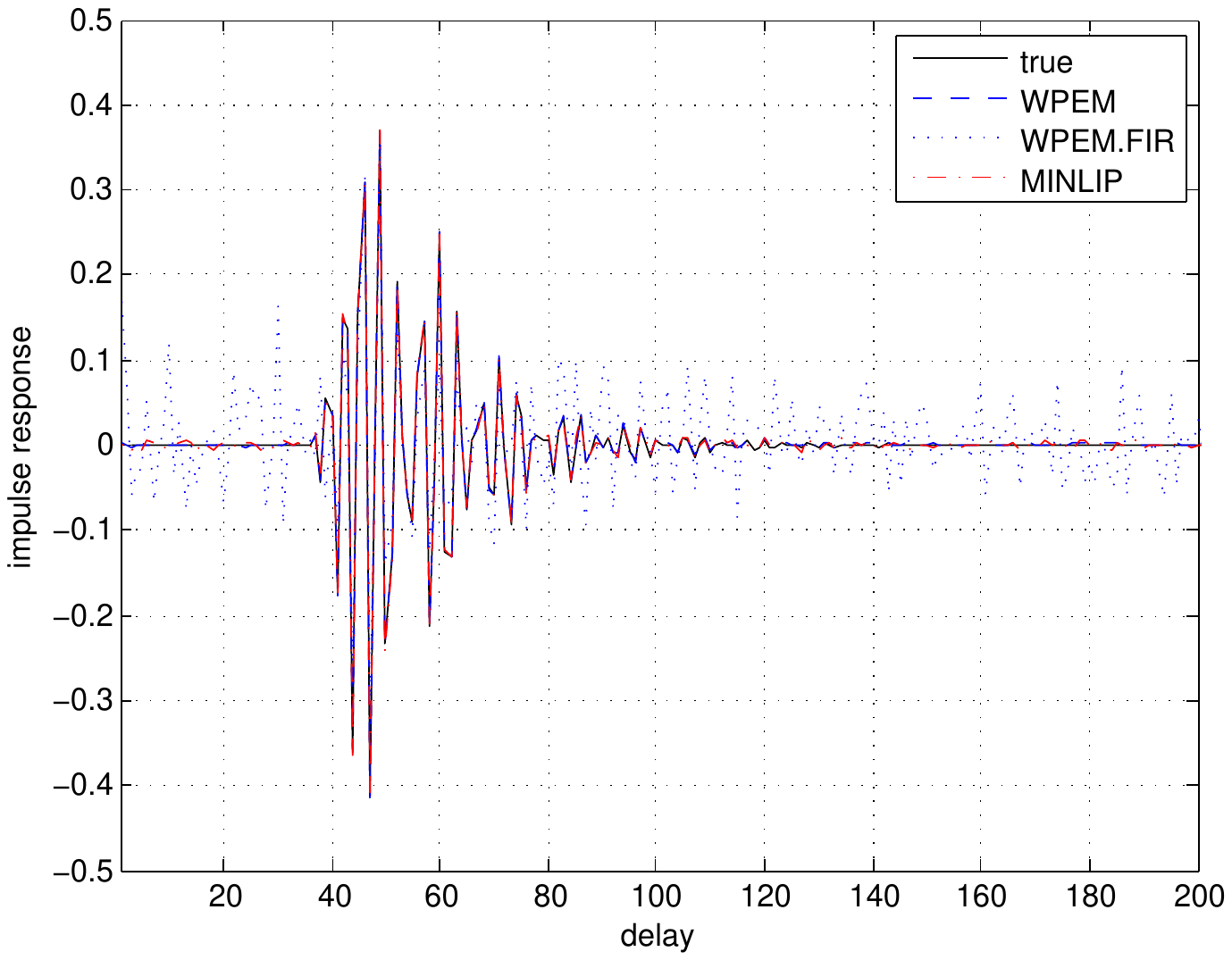}} &
	\subfigure{\includegraphics[width=1.6in]{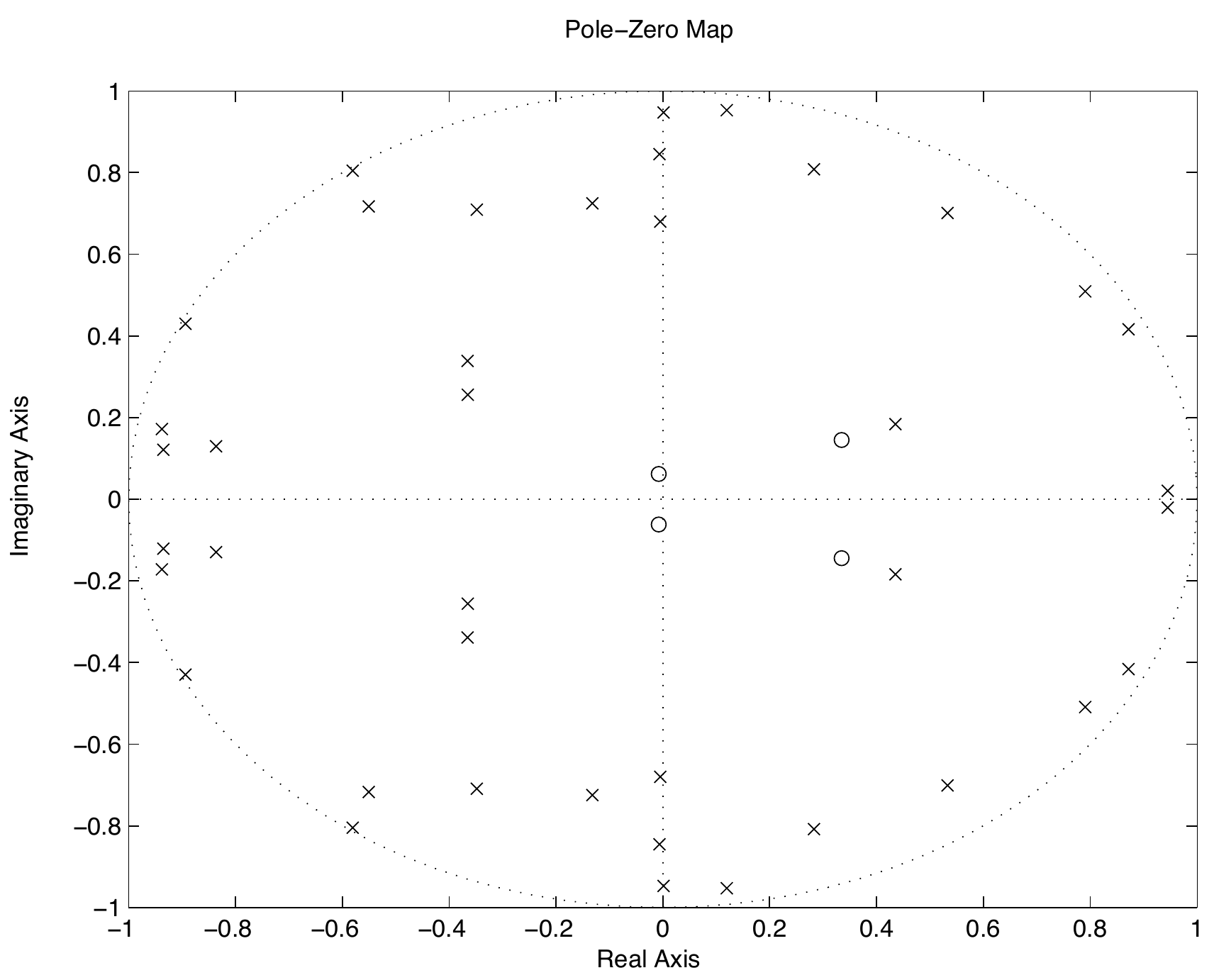}} 
	\end{tabular}
   \caption{\em
	An example of a system $H_0$ randomly generated, with $n_p=20$ and $n_z=2$.
	Panel (a) shows the resulting impulse response (and hence a FIR approximation) up to lag $d=200$.
	Panel (b) displays the conjugate poles and conjugate zeros in the complex domain 
	using a pole-zero plot of the system.
	Observe (i) the presence of a considerable (non-zero) delay, and 
	(ii) the fact that some poles are located close to the unit circle.
	This makes a inverse modeling approach unfeasible.
   }
   \label{fig.ex1}
\end{figure}

\begin{figure}[htbp] %  figure placement: here, top, bottom, or page
	\begin{tabular}{lr}
	\subfigure{\includegraphics[width=1.3in]{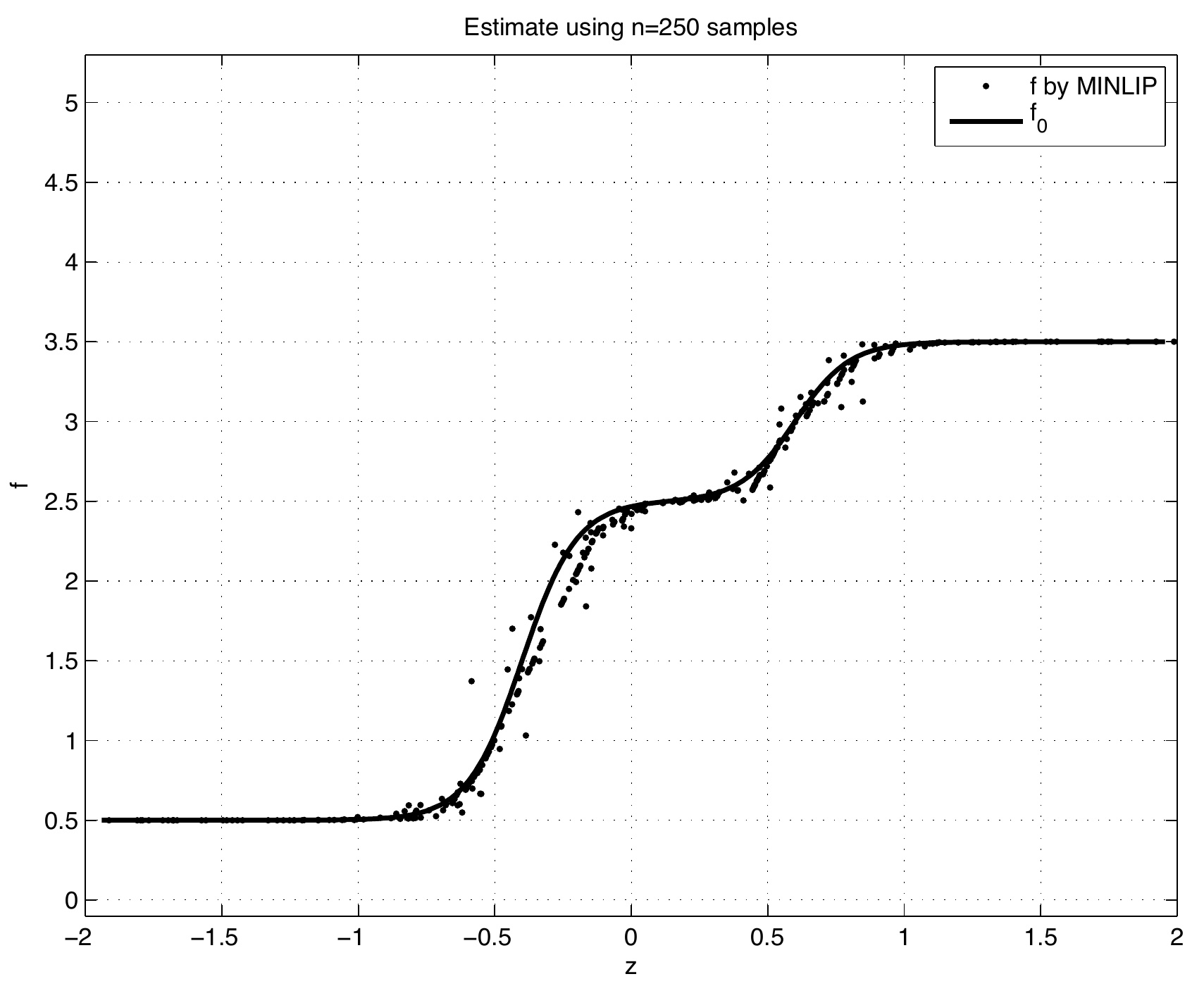}} &
	\subfigure{\includegraphics[width=1.3in]{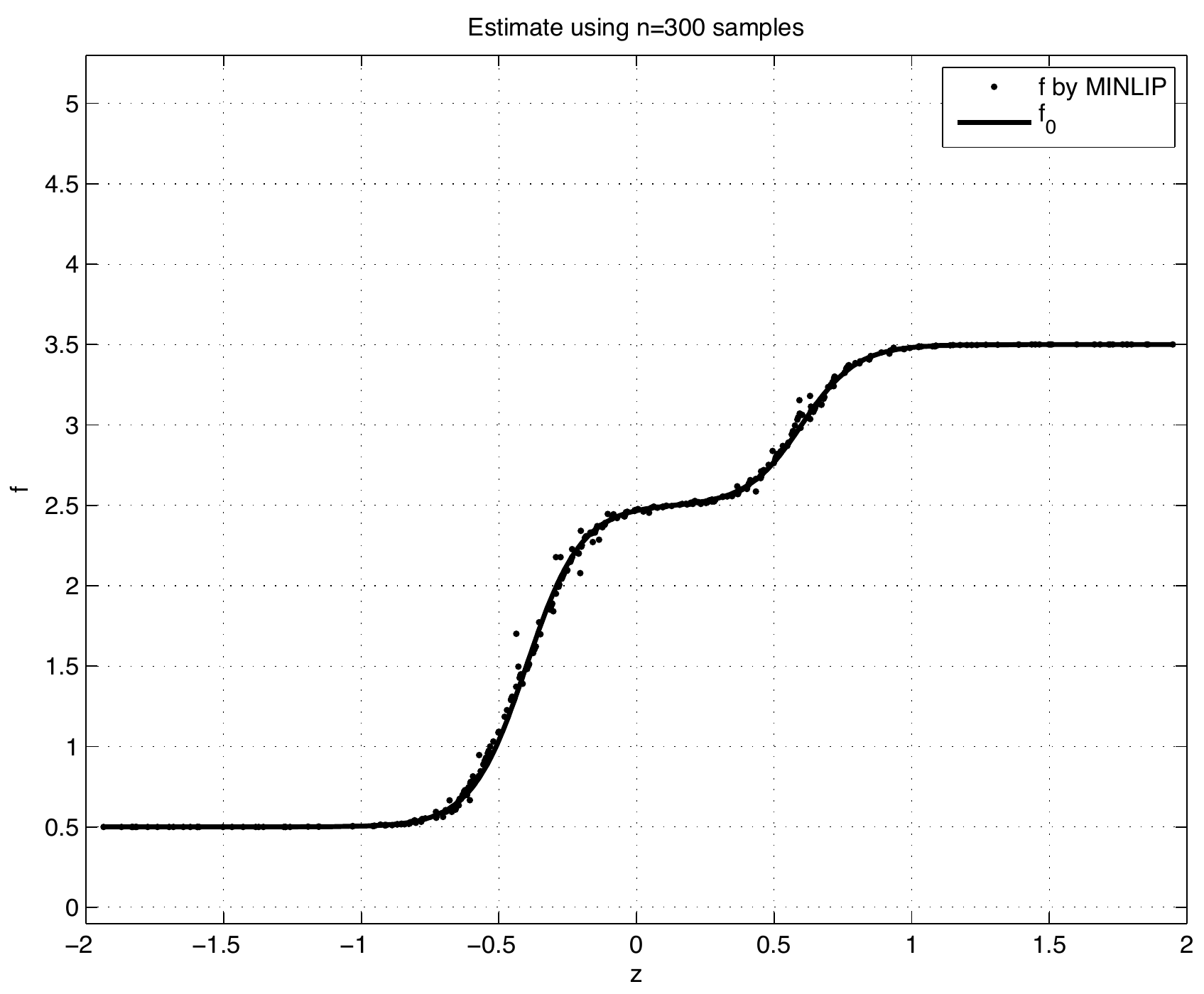}} \\
	\subfigure{\includegraphics[width=1.3in]{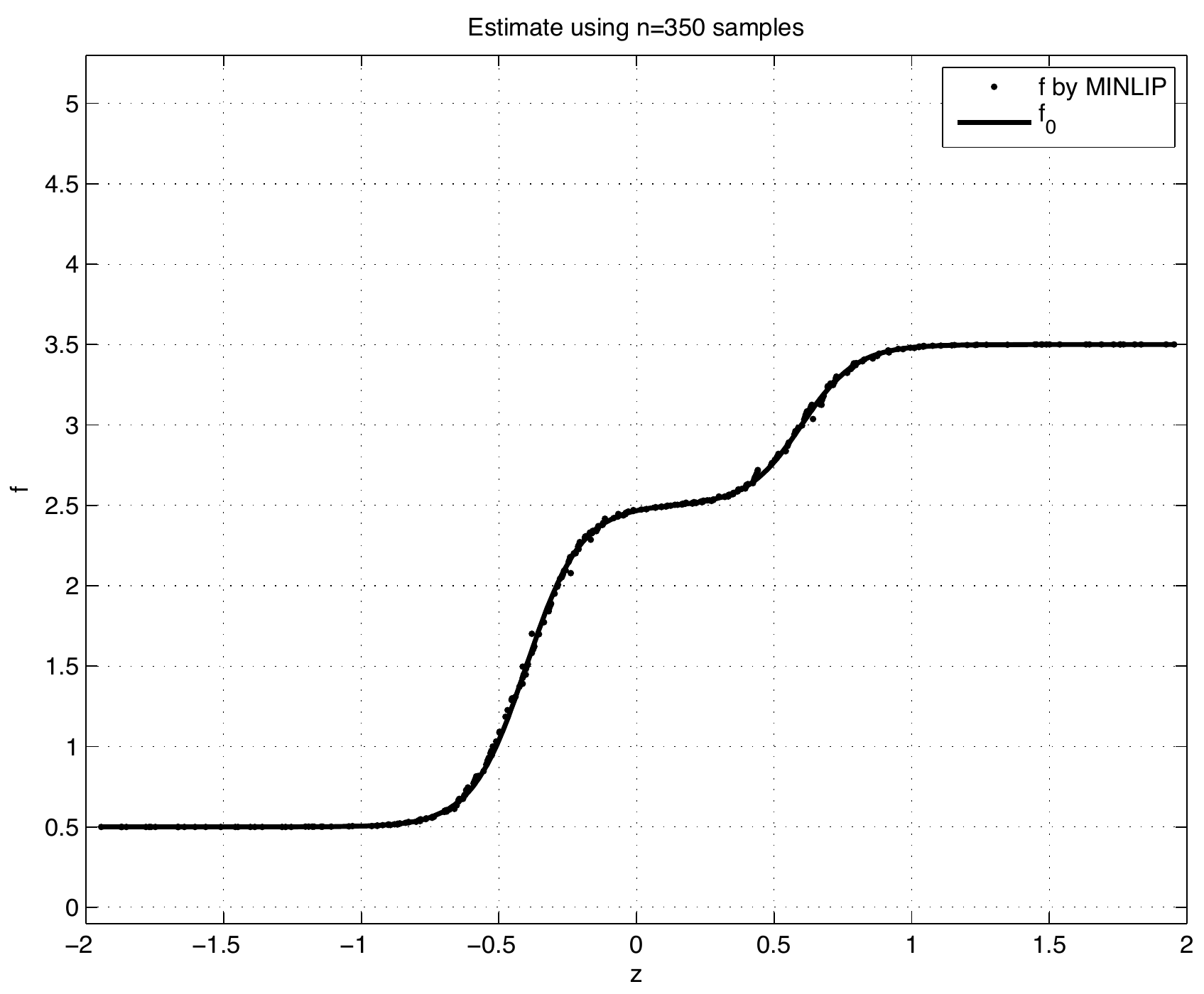}} &
	\subfigure{\includegraphics[width=1.3in]{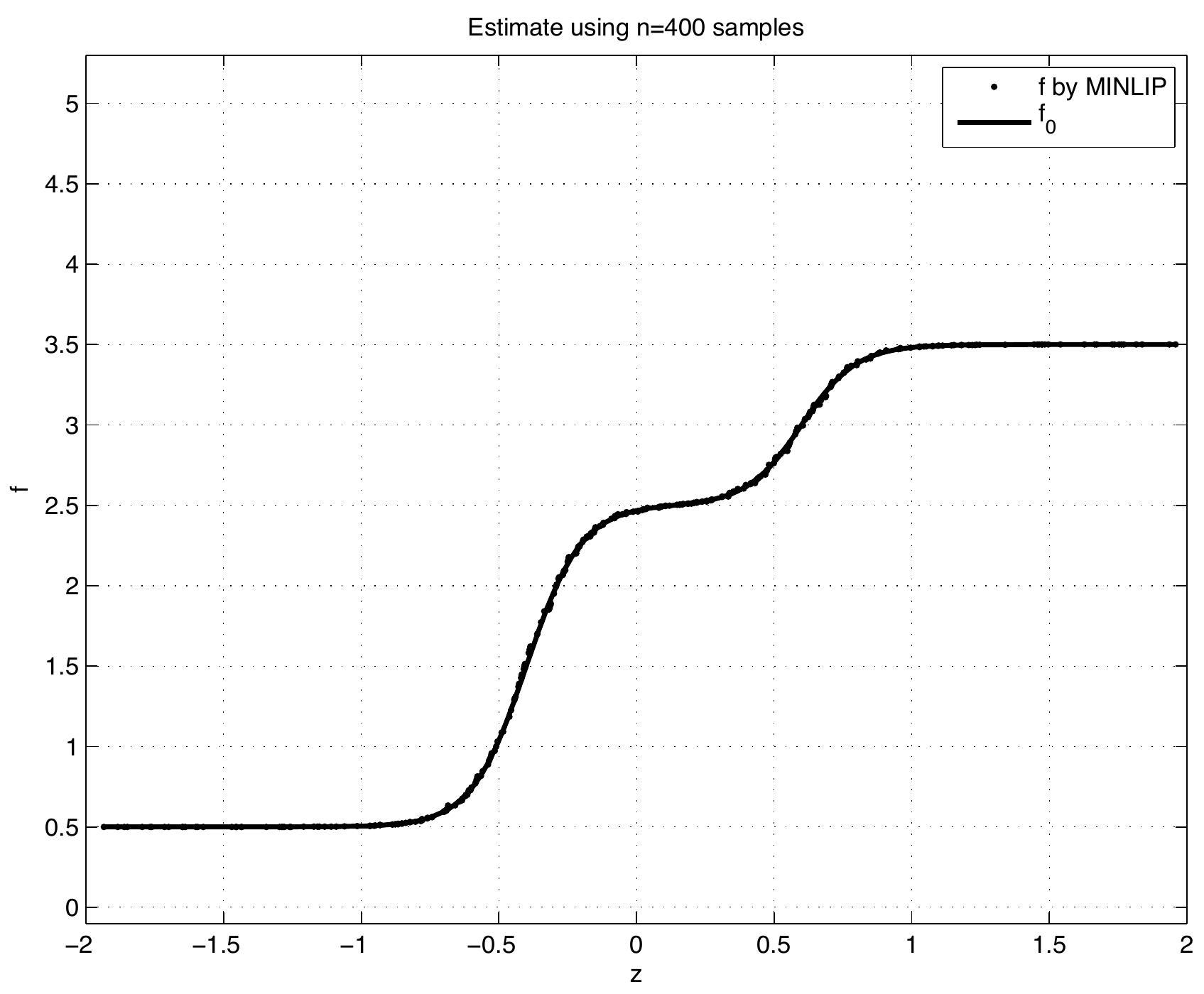}} 
	\end{tabular}
   \caption{\em
	Evolution of the estimate of MINLIP when provided by signals of length 
	(a) 450, (b) 500, (c) 550, (d) 600 samples, taken from the Wiener model described in Figure (\ref{fig.ex1})
	and eq. (\ref{eq.f0}). A FIR model of $d=200$ is used to approximate the linear system, taking
	care of the delay as well of the (unknown) model orders.
   }
   \label{fig.ex2}
\end{figure}

The following 6 different identification  methods were implemented to benchmark the MINLIP against:
\begin{enumerate}
\item{(LS '$x-y$')} In order to provide a (naive) lower-bound to the performance, 
	a FIR identification 
	technique based on a Least Squares (LS) argument 
	was implemented on the signals $\{u_t\}_t$ and $\{y_t\}_t$ directly, 
	neglecting the Wiener structure altogether.

\item{(LS '$x-z$')}
	In order to get an upper-bound on the performance of the identification technique, 
	an ARX identification technique was implemented based on the 
	(latent) intermediate signal $\{z_t = H_0(q^{-1})u_t\}_t$.

\item{(WPEM FIR)}
	We consider a FIR model structure of sufficiently high order (here $d=200$) 
	such that (\ref{eq.H1}) can be represented fairly well, 
	and we let the corresponding FIR coefficients act directly as unknowns.
	The nonlinearity is represented as a piecewise function
	based on 20 fixed knots which were optimally tuned to the example at hand. 
	Global optimization on both FIR coefficients as well as on the unknowns of the nonlinearity 
	is performed by the Broyden-Fletcher-Goldfarb-Shannon (BFGS) method implemented in MATLAB
	in the \verb|fminunc| function.

\item{(WPEM ARX)}
	Here we implement the same approach now based on a class of ARX models 
	representing the optimal predictors corresponding to   (\ref{eq.H1}). 
	Now, we let the poles and zeros of this model class act as unknowns directly, and 
	As before, the nonlinearity is expressed as a piecewise linear function with fixed grid points.
	Global optimization is performed by the BFGS method.

\item{(Greblicki2002)}
	The smoothing approach as described in \cite{greblicki2008}
	is implemented as well. This method works directly on a FIR overparameterization 
	of the (linear sub-) system, and gives reasonable estimates when sufficiently many input-samples
	following a stochastic (approximately white) Gaussian process are provided.

\item{(Bai2006)}
	The last approach we benchmark against is the technique 
	described in \cite{zhang2006,bai2008} using prior knowledge of the monotonicity of the output function.
	This technique is implemented by solving 
	\begin{equation}%{\small
		 \min_a \sum_{j=d+1}^T \left( \sign(y_{(j)}-y_{(j-1)}) - \tilde\sign((\U_{(j)}-\U_{(j-1)})^Ta)\right)^2,
		\label{eq.bai}
	%}
	\end{equation}
	which is in our experiments solved by the BFGS method.
	We found that in order to make this approach to work well one needs to 
	resort to a smooth proxy '$\tilde\sign$' of the discrete function '$\sign(z) = I(z>0)-I(z<0)$', 
	making gradient information available at most unknowns in the search space. In many cases solving 
	this problem takes substantially more resources (CPU-power, memory) compared to the other techniques.
\end{enumerate}

Accuracy of an estimate is expressed in terms of the angle between the true impulse response of $H_0$ 
and the impulse response of the estimate $\hat{H}_T$. 
Let as classical the $L_2$ norm of a system $H$ be defined as 
\begin{equation}
	\sum_{t=0}^\infty (H_0(q^{-1})\delta_t)^2,
	\label{eq.L2}
\end{equation}
where $\delta_\tau$ is a time-series of all zeros except for the first location which equals one.
Then the correlation of two systems $H_0$ and $\hat{H}_T$ may be defined as 
\begin{equation}
	d(H_0,\hat{H}_T) = \frac{\sum_{t=0}^\infty H_0(q^{-1})\delta_t \ \hat{H}_T(q^{-1})\delta_t}{\|H_0\|_2 \|\hat{H}_T\|_2}.
	\label{eq.corr}
\end{equation}
If the systems $H_0$ and $\hat{H}_T$ have impulse response vector $h_0$ and $\hat{h}$ respectively,
this coefficient can be written as the Pearson correlation coefficient between those two vectors, or $\frac{h_0^T\hat{0}}{\|h_0\|_2 \|\hat{h}\|_2}$.
There are 3 reasons for adopting this definition.
The first is that the gain of the system $H_0$ is unidentifiable (hence cannot play a role in the quality measure), and 
the second one is that the impulse response is the common denominator for any LTI, independently of a parametrization. 
Thirdly, the current method concentrates on first instance only at identification of the linear system, while making predictions
requires additional estimation of the static nonlinearity. As such, measuring performance based on prediction accuracy would 
convolute the results with performance of this reconstruction step as well.

\subsection{A Noiseless Example}
%%%%%%%%%%%%%

The first experiment is based on noiseless data.
Figure (\ref{fig.perf1}) shows the results of the experiment, 
where in each iteration $T$ samples are generated from a random system $(f_0,H_0)$,
the different identification algorithms are carried out, and their respective accuracy is computed.
This iteration is performed 100 times for any $T = 300,400,500,\dots,1000$.
We see from the results that the MINLIP estimator converges fast to the 
best achievable performance (indicated by the LS '$x-z$' approach). 
The WPEM algorithms give in many cases unreliable results, performing much worse on the average.
Specifically, we find that it is bad practice to combine the WPEM on (overparametrized) 
FIR model, perhaps because global optimization often presents (numerical) problems 
when optimizing over so large a set of parameters.
MINLIP however can handle such overparametrization quite efficiently, and 
does as such not require to determine the model orders and the delay of the system.
This suggests the use of complexity control to give a principled tool to handle the task of 
model order selection.

\begin{figure}[htbp] %  figure placement: here, top, bottom, or page
	\includegraphics[width=3in]{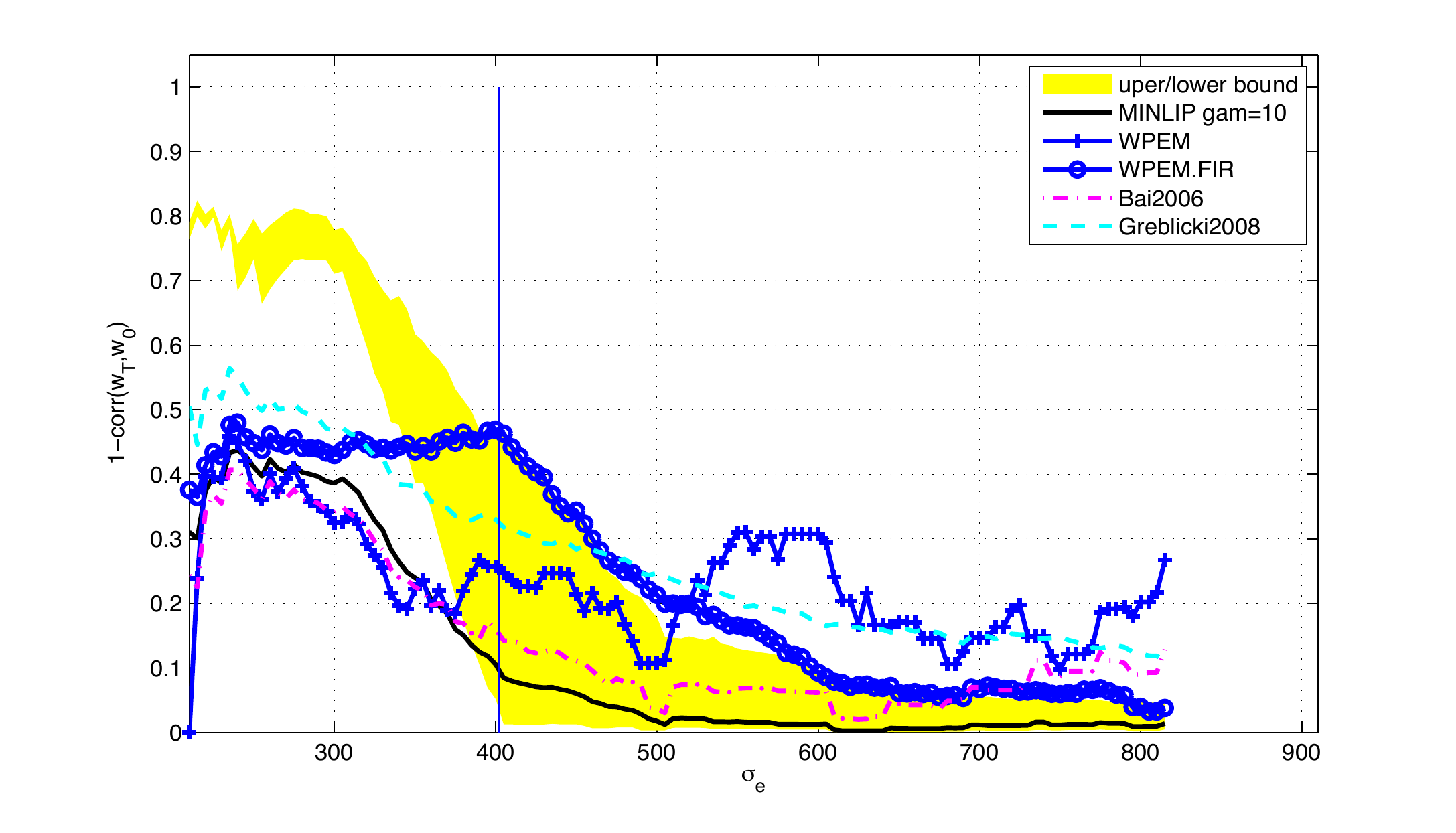}
   	\caption{\em
		Results of the first experiment using noiseless data generated from 
		a monotone Wiener nonlinearity as in eq. (\ref{eq.f0}) and systems $H_0$ as in (\ref{eq.H1}).
		Performance expressed as correlations of $H_0$ and $\hat{H}_T$ as in (\ref{eq.corr}) are displayed 
		for sample sizes ranging from T=210 to T=1000, using a FIR overparametrization of $d=200$.
		The vertical line denotes the place where a least squares technique would exactly reconstruct $H_0$
		if $z_t = H_0(q^{-1}) u_t$ were given.
	}
   \label{fig.perf1}
\end{figure}

\subsection{Quantized signals}
%%%%%%%%%%%%%%%%%%%%

Here we investigate the use of MINLIP in case 
the output signal is quantized (i.e. takes a small number of different values).
This task poses additional challenges as there is a direct need to handle tied output values well.
We adapt the following procedure: if $y_{(i-1)}=y_{(i)}$ (tied),
then we cannot compare the corresponding values $a^T\U_{(i-1)}$ and $a^T\U_{(i)}$ unambiguously.
Rather, we compare $a^T\U_{(i-1)}$ and $a^T\U_{(i)}$ both with the samples $(y_j,a^T\U_j)$ and 
$(y_k,a^T\U_k)$ which have a strictly lower value $y_j<y_{(i-1)},y_{(i)}<y_k$.
Again by transitivity of the relation $<$ one can prune many of the relations in the final QP.
In the worst case only 2 different output levels are observed on $T$ samples, and both levels are each observed on $\frac{T}{2}$ samples.
Then one has to work with $\frac{T^2}{4}$ inequality constraints rather than the $O(T)$ ones in the standard implementation.
It may be argued that the theoretical account for Monotone Wiener systems as given above does not hold strictly as the 
steepest part ('the jumps') have an unbounded  Lipschitz constant. However, as the dataset is finite, this measure is 
necessarily finite and the method continuous to yield good solutions. If there are no samples present 'near the jumps', 
the analysis hold with an appropriate function $g$ dependent on this 'margin', and the size of the jumps.

In this example we define the output nonlinearity for any $z\in\R$ as  
\begin{equation}
	f'_0(z) = I(z>-0.5) + I(z>2),
	\label{eq.f1}
\end{equation}
with the output taking values in the set $\{0,1,2\}$, and the jumps of the function occurring when $z=-0.5$ and $z=2$.
Again, the linear systems $H_0$ used to generate the signals are as  in eq. (\ref{eq.H1}),
and we benchmark the MINLIP against the approaches described in the previous subsection (LS, WPEM and BAI).
The results are displayed in Fig. (\ref{fig.perf2}).

From these results we may suggest a few guidelines.
The first is that a naive LS '$x-y$' regression works surprisingly good,
and it is not at all trivial to beat this one. The reason the approaches based on 
global optimization (i.e. WPEM, WPEM.FIR and Bai2006) do not work as good might 
be that the discrete nature of the identification task translates in a highly non-smooth
cost surface given to the optimizer. 
This experiment however suggests that MINLIP achieves a solution which is 
often close to the best one could hope for (indicated by the LS '$x-z$' method).
The averaging approach proposed in \cite{greblicki2008} appears fairly robust to the quantization effects as well.

\begin{figure}[htbp] %  figure placement: here, top, bottom, or page
	\includegraphics[width=3in]{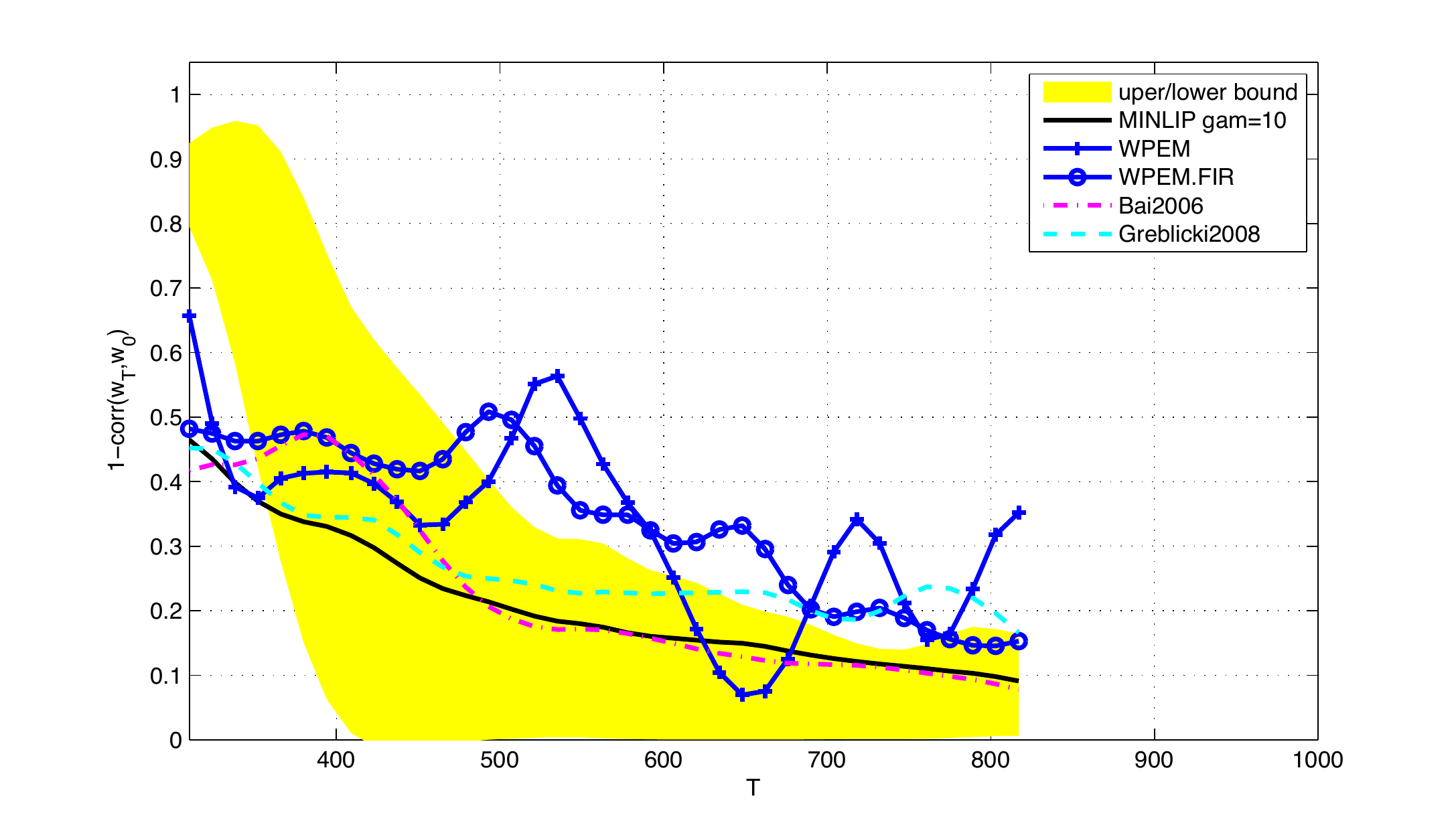}
   	\caption{\em
		Results of the quantization experiment using noiseless data generated from 
		a monotone Wiener nonlinearity as in eq. (\ref{eq.f1}) and systems $H_0$ as in (\ref{eq.H1}).
		Performance expressed as correlations of $H_0$ and $\hat{H}_T$ as in (\ref{eq.corr}) are displayed 
		for sample sizes ranging from $T=210$ to $T=1000$, using a FIR overparameterization of $d=200$.
	}
   \label{fig.perf2}
\end{figure}

\subsection{The noisy case}
%%%%%%%%%%%%%%%%%%%%

This subsection reports results achieved with MINLIP in case the intermediate signal 
$\{z_t\}_t$ is perturbed by noise.
The experiment is set up as before in Subsection 2.3, but the system becomes now
\begin{equation}
	y_t = f_0\left(H_0(q^{-1}) u_t +e_t\right), \ \forall t=1,\dots,T,
	\label{eq.H2}
\end{equation}
where $f_0$ is as in eq. (\ref{eq.f0}) and $H_0$ is randomly generated as in (\ref{eq.H1}).
The terms $\{e_t\}_t$ are zero mean white Gaussian noise with standard deviation $\sigma_e>0$.
This experiment is conceived in a slightly different manner than before.
We consider a fixed number of samples $T=500$, and let the Signal-to-Noise Ratio (SNR) vary from 0.1 to 10
(or $\sigma_e=0.1,\dots,10$). 
As indicated in Section III, MINLIP is dependent on the choice of a suitable $\gamma>0$,
which is in turn depending on the noise variance $\sigma_e$.
As this characteristic is unknown in general in practical applications,
the choice of $\gamma$ is to be made based on a suitable model selection technique.
Actually, the problem of model selection in the context of a Wiener model is 
not covered as such, and prompts new questions related to information criteria, stability and consistency.
For now, we use a fixed value of $\gamma=10$ which works well in many cases.
The results are displayed in graph (\ref{fig.perf3}).
Those results indicate that the MINLIP outperforms the other techniques especially
when noise is small compared to the 'informative' signal, while all techniques become arbitrarily bad 
when this ratio grows.

In the next experiment fix an SNR of 3 and lets see what happens to the performance of the different estimators 
if the given signals have an increasing length. 
The evolution of the average accuracy of MINLIP and the competing estimators is given in Fig. (\ref{fig.perf4}).
From this result we see that the behavior is not too different from the noiseless case,
except the fact that the WPEM and WPEM.FIR approaches are not very robust to noise,
and the approach in (\ref{eq.bai}) is clearly a bad choice in this case and needs additional care.

\begin{figure}[htbp] %  figure placement: here, top, bottom, or page
	\includegraphics[width=3in]{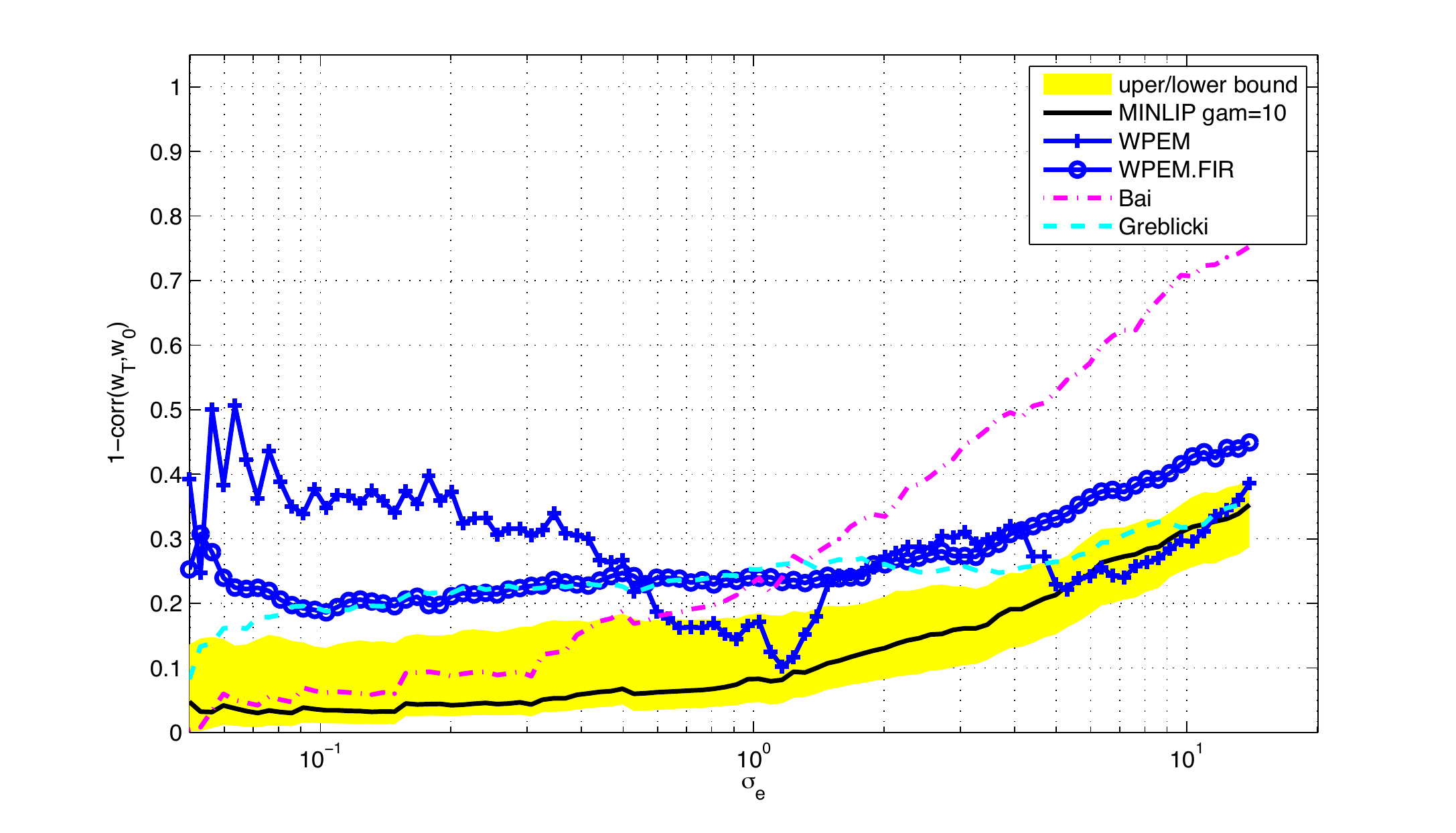}
   	\caption{\em
		Results of the third experiment using noisy data generated from 
		a monotone Wiener nonlinearity as in eq. (\ref{eq.f0}) and systems $H_0$ as in (\ref{eq.H1}).
		Performance expressed as correlations of $H_0$ and $\hat{H}_T$ as in (\ref{eq.corr}) are displayed 
		for a sample of sizes $T=500$. The amount of noise varies from $\sigma_e=0.1$ to $\sigma_e=3$,
		where the linear system $H_0$ is rescaled such that $\sigma_z=1$ (corresponding with SNR from 10 to 0.66).
		MINLIP was implemented with a fixed $\gamma=10$, 
		and can be seen to outperform other methods especially when the SNR is relatively high.
	}
   \label{fig.perf3}
\end{figure}

\begin{figure}[htbp] %  figure placement: here, top, bottom, or page
	\includegraphics[width=3in]{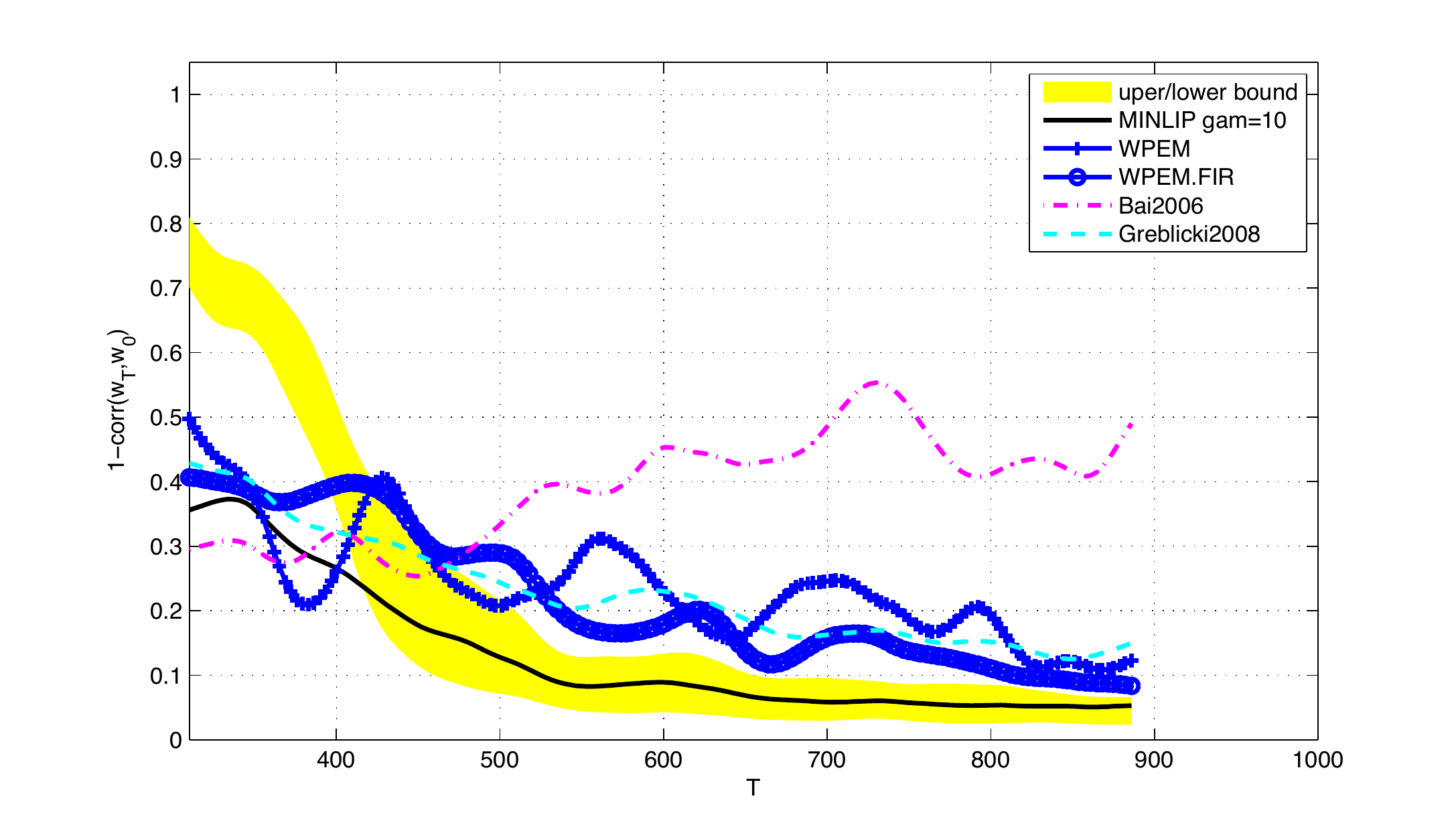}
   	\caption{\em
		Results of the last experiment using noisy data generated from 
		a monotone Wiener nonlinearity as in eq. (\ref{eq.f0}) and systems $H_0$ as in (\ref{eq.H1}), and SNR 
		equal to 3. Performances of the different estimators are 
		expressed as $1-$correlations of $H_0$ and $\hat{H}_T$ as in (\ref{eq.corr}).
		The sample sizes ranges from $T=210$ to $T=1000$, using a FIR over-parametrization of $d=200$.
	}
   \label{fig.perf4}
\end{figure}

\section{DISCUSSION}
%%%%%%%%%%%%

This paper studies how MINLIP works for identification 
of monotone Wiener systems. Theoretical as well as 
empirical evidence indicates the use of the estimate despite 
its unconventional groundings.
Especially, one of the points is that this method based on model complexity control
can handle FIR overparameterizations of the linear subsystem quite efficiently,
implementing implicitly model order- and delay- estimation during the identification task.
The crux of the method is to place model complexity control in the centre of the identification task. 
The hope is that this line of thinking provides novel ideas which are useful in the 
design and analysis of identification algorithms for more general nonlinear systems.
A main open question is a theoretical study of the influence 
of noise in the almost consistency result.

%\bibliographystyle{plain}
%\bibliography{../../refs} 

\end{document}